\newcommand{\ifcomments}{\iffalse}
\newcommand{\shortsection}[1]{\vspace{1ex}\noindent{\bf #1.}}
\definecolor{ForestGreen}{cmyk}{0.864, 0.0, 0.429, 0.396}
\newcommand{\xnote}[1]{\ifcomments{\color{blue} [\bf {X:}  #1]}\fi}
\def \sgn {\mathrm{sgn}}
\def \Risk {\mathrm{Risk}}
\def \AdvRisk {\mathrm{AdvRisk}}
\def \AdvRob {\mathrm{AdvRob}}
\def \LU {\mathrm{LU}}
\def \Ball {\mathrm{Ball}}
\def \pow {\mathsf{pow}}
\def \AdvRisk {\mathrm{AdvRisk}}
\def \init {\mathrm{init}}
\def \exp {\mathrm{exp}}
\def \lu {\mathrm{lu}}
\def\url@leostyle{%
  \@ifundefined{selectfont}{\def\UrlFont{\sf\small}}{\def\UrlFont{\sffamily\small}}}
\title{Understanding Intrinsic Robustness \\ Using Label Uncertainty}
\author{Xiao Zhang \\
Department of Computer Science\\
University of Virginia\\
\textsf{\small shawn@virginia.edu} \\
\And
David Evans \\
Department of Computer Science \\
University of Virginia \\
\textsf{\small evans@virginia.edu}
}
\begin{document}

\maketitle

\begin{abstract}
A fundamental question in adversarial machine learning is whether a robust classifier exists for a given task. A line of research has made some progress towards this goal by studying the concentration of measure, but we argue standard concentration fails to fully characterize the intrinsic robustness of a classification problem since it ignores data labels which are essential to any classification task. Building on a novel definition of label uncertainty, we empirically demonstrate that error regions induced by state-of-the-art models tend to have much higher label uncertainty than randomly-selected subsets. This observation motivates us to adapt a concentration estimation algorithm to account for label uncertainty, resulting in more accurate intrinsic robustness measures for benchmark image classification problems.
\end{abstract}

\section{Introduction}

Since the initial reports of adversarial examples against deep neural networks~\citep{szegedy2014intriguing, goodfellow2015explaining}, many defensive mechanisms have been proposed aiming to enhance the robustness of machine learning classifiers. Most have failed, however, against stronger adaptive attacks~\citep{athalye2018obfuscated,tramer2020adaptive}. PGD-based adversarial training~\citep{madry2017towards} and its variants~\citep{zhang2019theoretically,carmon2019unlabeled} are among the few heuristic defenses that have not been broken so far, but these methods still fail to produce satisfactorily robust classifiers, even for classification tasks on benchmark datasets like CIFAR-10. 
Motivated by the empirical hardness of adversarially-robust learning, a line of theoretical works~\citep{gilmer2018adversarial, fawzi2018adversarial, mahloujifar2019curse, shafahi2018adversarial} have argued that adversarial examples are unavoidable. In particular, these works proved that as long as the input distributions are concentrated with respect to the perturbation metric, adversarially robust classifiers do not exist. Recently, \citet{mahloujifar2019empirically} and \citet{prescott2021improved} generalized these results by developing empirical methods for measuring the concentration of arbitrary input distributions to derive an intrinsic robustness limit. (Appendix~\ref{sec:related} provides a more thorough discussion of related work.)

We argue that the standard concentration of measure problem, which was studied in all of the aforementioned works, is not sufficient to capture a realistic intrinsic robustness limit for a classification problem. In particular, the standard concentration function is defined as an inherent property regarding the input metric probability space that does not take account of the underlying label information. We argue that such label information is essential for any supervised learning problem, including adversarially robust classification, so must be incorporated into intrinsic robustness limits.

\shortsection{Contributions}
We identify the insufficiency of the standard concentration of measure problem and demonstrate why it fails to capture a realistic intrinsic robustness limit (Section \ref{sec:insufficient}). Then, we introduce the notion of \emph{label uncertainty} (Definition \ref{def:label uncertainty}), which characterizes the average uncertainty of label assignments for an input region. We then incorporate label uncertainty in the standard concentration measure as an initial step towards a more realistic characterization of intrinsic robustness (Section \ref{sec:label uncertainty}). Experiments on the CIFAR-10 and CIFAR-10H~\citep{peterson2019human} datasets demonstrate that error regions induced by 
state-of-the-art classification models all have high label uncertainty (Section~\ref{sec:exp validation}), which validates the proposed label uncertainty constrained concentration problem. 

By adapting the standard concentration estimation method in \citet{mahloujifar2019empirically}, we propose an empirical estimator for the label uncertainty constrained concentration function. We then theoretically study the asymptotic behavior of the proposed estimator and provide a corresponding heuristic algorithm for typical perturbation metrics (Section \ref{sec:algorithm}). We demonstrate that our method is able to produce a more accurate characterization of intrinsic robustness limit for benchmark datasets than was possible using prior methods that do not consider labels (Section \ref{sec:exp estimation cifar}). 
Figure \ref{fig:contribution visual} illustrates the intrinsic robustness estimates resulting from our label uncertainty approach on two CIFAR-10 robust classification tasks. The intrinsic robustness estimates we obtain by incorporating label uncertainty are much lower than prior limits, suggesting that compared with the concentration of measure phenomenon, the existence of uncertain inputs may explain more fundamentally the adversarial vulnerability of state-of-the-art robustly-trained models. In addition, we also provide empirical evidence showing that both the clean and robust accuracies of state-of-the-art robust classification models are largely affected by the label uncertainty of the tested examples, suggesting that adding an abstain option based on label uncertainty is a promising avenue for improving adversarial robustness of deployed machine learning systems (Section \ref{sec:exp abstain}). 

\shortsection{Notation}
We use $[k]$ to denote $\{1,2,\ldots,k\}$ and use $\Ib_n$ to denote the $n\times n$ identity matrix. 
For any set $\cA$, $|\cA|$ denotes its cardinality, $\pow(\cA)$ is all its measurable subsets, and $\ind_{\cA}(\cdot)$ is the indicator function of $\cA$. 
Consider metric probability space $(\cX, \mu, \Delta)$, where $\Delta:\cX\times\cX\rightarrow\RR_{\geq 0}$ is a distance metric on $\cX$.
Define the empirical measure of $\mu$ with respect to a data set $\cS$ sampled from $\mu$ as $\hat\mu_{\cS}(\cA) = \sum_{\bx\in\cS}\ind_{\cA}(\bx)/|\cS|$. Denote by $\cB_{\epsilon}(\bx, \Delta)$ the ball around $\bx$ with radius $\epsilon$ measured by $\Delta$. The $\epsilon$-expansion of $\cA$ is defined as $\cA_\epsilon(\Delta) = \{\bx\in\cX:\exists\:\bx'\in\cB_{\epsilon}(\bx, \Delta)\cap\cA\}$. When $\Delta$ is free of context, we simply write $\cB_{\epsilon}(\bx) = \cB_{\epsilon}(\bx, \Delta)$ and $\cA_\epsilon = \cA_\epsilon{(\Delta)}$. 

\begin{figure}[tb]
  \centering
    \subfigure[$\ell_\infty$ perturbations]{
    \includegraphics[width=0.42\textwidth]{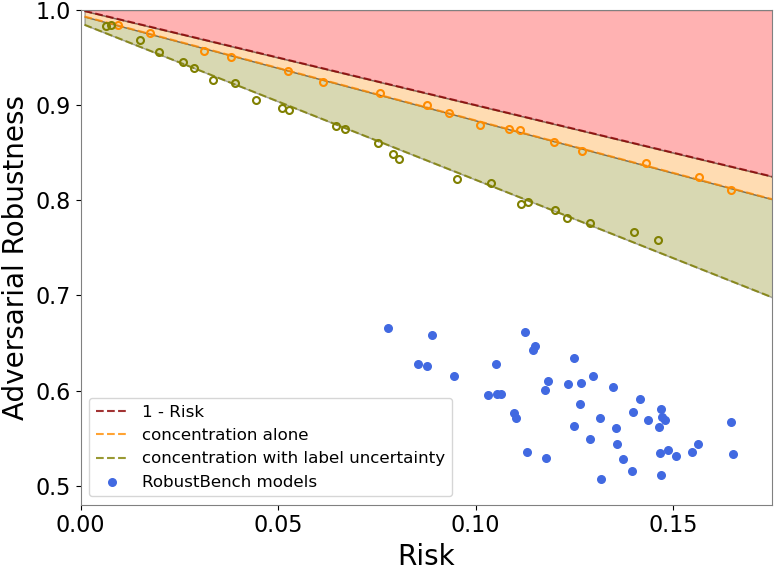}
    \label{fig:linf visual}}
    \hspace{0.2in}
    \subfigure[$\ell_2$ perturbations]{
    \includegraphics[width=0.42\textwidth]{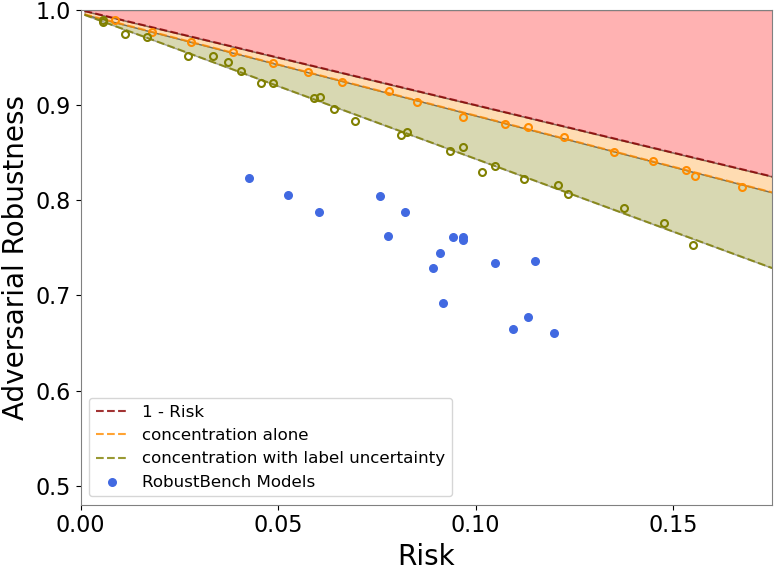}
    \label{fig:l2 visual}}
  \vspace{-0.1in}
  \caption{Intrinsic robustness estimates for classification tasks on CIFAR-10 under (a) $\ell_\infty$ perturbations with  $\epsilon=8/255$ and (b) $\ell_2$ perturbations with $\epsilon=0.5$. Orange dots are intrinsic robustness estimates using the method in \citet{prescott2021improved}, which does not consider labels; green dots show the results using our methods that incorporate label uncertainty; blue dots are results achieved by the state-of-the-art adversarially-trained models in RobustBench \citep{croce2020robustbench}. Three fundamental causes behind the adversarial vulnerability can be summarized as imperfect risk (red region), concentration of measure (orange region) and existence of uncertain inputs (green region).} 
\vspace{-0.1in}
\label{fig:contribution visual}
\end{figure}
\section{Preliminaries} 
\label{sec:preliminary}

\shortsection{Adversarial Risk}
Adversarial risk captures the vulnerability of a classifier against adversarial perturbations. In particular, we adopt the following adversarial risk definition, which has been studied in several previous works, such as \citet{gilmer2018adversarial,bubeck2018adversarial,mahloujifar2019curse,mahloujifar2019empirically,zhang2020understanding, prescott2021improved}. 

\begin{definition}[Adversarial Risk]
\label{def:adv risk}
Let $(\cX,\mu, \Delta)$ be a metric probability space of instances and $\cY$ be the set of possible class labels. Assume $c:\cX\rightarrow\cY$ is a concept function that gives each instance a label. For any classifier $f:\cX\rightarrow\cY$ and $\epsilon\geq 0$, the \emph{adversarial risk} of $f$ is defined as: 
\begin{equation*}
\AdvRisk_\epsilon(f, c) = \Pr_{\bx\sim\mu} \big[\exists\: \bx'\in\cB_{\epsilon}(\bx)  \text{ s.t. } f(\bx')\neq c(\bx')\big].
\end{equation*}
The \emph{adversarial robustness} of $f$ is defined as: $\AdvRob_\epsilon(f,c) = 1 - \AdvRisk_\epsilon(f,c)$.
\end{definition} 
When $\epsilon=0$, adversarial risk equals to the standard risk. Namely, $\AdvRisk_0(f, c) = \Risk(f, c) := \Pr_{\bx\sim\mu}[f(\bx)\neq c(\bx)]$ holds for any classifier $f$. Other definitions of adversarial risk have been proposed, such as the one used in \citet{madry2017towards}. These definitions are equivalent to the one we use, as long as small perturbations preserve the labels assigned by $c(\cdot)$.

\shortsection{Intrinsic Robustness}
The definition of intrinsic robustness was first introduced by \citet{mahloujifar2019empirically} to capture the maximum adversarial robustness with respect to some set of classifiers:

\begin{definition}[Intrinsic Robustness]
\label{def:intrisic rob}
Consider the input metric probability space $(\cX,\mu,\Delta)$ and the set of labels $\cY$. Let $c:\cX\rightarrow\cY$ be a concept function that gives a label to each input. For any set of classifiers $\cF\subseteq\{f:\cX\rightarrow\cY\}$ and $\epsilon\geq 0$, the \emph{intrinsic robustness} with respect to $\cF$ is defined as:
\begin{align*}
    \overline{\AdvRob}_{\epsilon}(\cF, c) = 1-\inf_{f\in\cF}\big\{\AdvRisk_\epsilon(f, c) \big\} = \sup_{f\in\cF}\{\AdvRob_{\epsilon}(f, c)\}.
\end{align*}
\end{definition}
According to the definition of intrinsic robustness, 
there does not exist any classifier in $\cF$ with adversarial robustness higher than $ \overline{\AdvRob}_{\epsilon}(\cF, c)$ for the considered task. 
Prior works, including \citet{gilmer2018adversarial, mahloujifar2019curse, mahloujifar2019empirically, zhang2020understanding}, selected $\cF$ in Definition \ref{def:intrisic rob} as the set of imperfect classifiers $\cF_\alpha = \{f:\Risk(f,c)\geq \alpha\}$, where $\alpha\in(0,1)$ is set as a small constant that reflects the best classification error rates achieved by state-of-the-art methods. 

\xnote{I think we can put some discussions in the author response regarding `the initial motivation of defining intrinsic robustness' here.}

\shortsection{Concentration of Measure}
Concentration of measure captures a `closeness' property for a metric probability space of instances. More formally, it is defined by the concentration function:

\begin{definition}[Concentration Function]
\label{def:conc func}
Let $(\cX,\mu,\Delta)$ be a metric probability space. For any $\alpha\in(0,1)$ and $\epsilon\geq 0$, \emph{concentration function} is defined as: 
$h(\mu, \alpha, \epsilon) = \inf_{\cE\in\pow(\cX)}\{ \mu(\cE_\epsilon): \mu(\cE)\geq \alpha \}.$
\end{definition}

The standard notion of concentration function considers a special case of Definition \ref{def:conc func} with $\alpha=1/2$ (e.g., \citet{talagrand1995concentration}).
For some special metric probability spaces, one can prove the closed-form solution of the concentration function. The Gaussian Isoperimetric Inequality \citep{borell1975brunn,sudakov1978extremal} characterizes the concentration function for spherical Gaussian distribution and $\ell_2$-norm distance metric, and was generalized by \citet{prescott2021improved} to other $\ell_p$ norms.

\section{Standard Concentration is Insufficient}
\label{sec:insufficient}

We first explain a fundamental connection between the concentration of measure and the intrinsic robustness with respect to imperfect classifiers shown in previous work, and then argue that standard concentration fails to capture a realistic intrinsic robustness limit because it ignores data labels.

\shortsection{Connecting Intrinsic Robustness with Concentration of Measure}
Let $(\cX,\mu,\Delta)$ be the considered input metric probability space, $\cY$ be the set of possible labels, and $c:\cX\rightarrow\cY$ be the concept function that gives each input a label.
Given parameters $0<\alpha< 1$ and $\epsilon\geq 0$, the standard concentration problem can be cast into an optimization problem as follows:
\begin{align}
\label{eq:opt concentration}
    \minimize_{\cE\in\pow(\cX)}\: \mu(\cE_\epsilon) \quad\text{subject to}\quad \mu(\cE)\geq \alpha.
\end{align}
For any classifier $f$, let $\cE_f=\{\bx\in\cX: f(\bx)\neq c(\bx)\}$ be its induced error region with respect to $c(\cdot)$. By connecting the risk of $f$ with the measure of $\cE_f$ and the adversarial risk of $f$ with the measure of the $\epsilon$-expansion of $\cE_f$, \citet{mahloujifar2019curse} proved that the standard concentration problem \eqref{eq:opt concentration} is equivalent to the following optimization problem regarding risk and adversarial risk:
\begin{align*}
    \minimize_{f}\: \AdvRisk_{\epsilon}(f, c) \quad\text{subject to}\quad \Risk(f, c)\geq \alpha.
\end{align*} 

To be more specific, the following lemma characterizes the connection between the standard concentration function and the intrinsic robustness limit with respect to the set of imperfect classifiers:

\begin{lemma}[\citet{mahloujifar2019curse}]
\label{lem:connect}
Let $\alpha\in(0,1)$ and $\cF_\alpha = \{f:\Risk(f,c)\geq \alpha\}$ be the set of imperfect classifiers. For any $\epsilon\geq 0$, it holds that $\overline\AdvRob_\epsilon(\cF_\alpha, c) = 1 - h(\mu, \alpha, \epsilon).$
\end{lemma}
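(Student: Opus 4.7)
The plan is to show equality by reducing both sides to a common optimization problem, using the bijection (up to measure-zero issues) between classifiers and their error regions. By definition, proving $\overline{\AdvRob}_\epsilon(\cF_\alpha,c) = 1 - h(\mu,\alpha,\epsilon)$ is equivalent to proving
\begin{equation*}
\inf_{f \in \cF_\alpha} \AdvRisk_\epsilon(f,c) \;=\; \inf_{\cE \in \pow(\cX)} \{ \mu(\cE_\epsilon) : \mu(\cE) \geq \alpha \}.
\end{equation*}
I will establish both inequalities using the correspondence $f \leftrightarrow \cE_f = \{\bx : f(\bx) \neq c(\bx)\}$.

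For the $\geq$ direction, take any $f \in \cF_\alpha$ and consider $\cE_f$. By definition, $\mu(\cE_f) = \Risk(f,c) \geq \alpha$, so $\cE_f$ is feasible for the concentration problem. The key observation is that $\AdvRisk_\epsilon(f,c)$ equals exactly $\mu((\cE_f)_\epsilon)$: the event ``$\exists\: \bx' \in \cB_\epsilon(\bx)$ with $f(\bx') \neq c(\bx')$'' is by definition the event $\bx \in (\cE_f)_\epsilon$. Hence $\AdvRisk_\epsilon(f,c) = \mu((\cE_f)_\epsilon) \geq h(\mu,\alpha,\epsilon)$, and taking the infimum over $f \in \cF_\alpha$ yields the lower bound.

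For the $\leq$ direction, take any measurable $\cE$ with $\mu(\cE) \geq \alpha$. Using that $|\cY| \geq 2$, pick any label map that disagrees with $c$ on $\cE$: define $f_\cE(\bx) = c(\bx)$ for $\bx \notin \cE$ and $f_\cE(\bx) = y^\star \neq c(\bx)$ for $\bx \in \cE$ (choosing any alternative label pointwise). Then $\cE_{f_\cE} = \cE$, so $\Risk(f_\cE,c) = \mu(\cE) \geq \alpha$, placing $f_\cE \in \cF_\alpha$, and $\AdvRisk_\epsilon(f_\cE,c) = \mu(\cE_\epsilon)$ by the same identity as above. Therefore $\inf_{f \in \cF_\alpha} \AdvRisk_\epsilon(f,c) \leq \mu(\cE_\epsilon)$ for every feasible $\cE$, which gives the upper bound.

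The main obstacle is really just the second direction, specifically the construction of $f_\cE$ in a measurable way so that $f_\cE \in \cF_\alpha$ genuinely and $\AdvRisk_\epsilon(f_\cE,c) = \mu(\cE_\epsilon)$ on the nose. As long as $\cY$ has at least two elements and $c$ is measurable, a pointwise alternative-label choice is measurable, so there is no real subtlety; the identity $\AdvRisk_\epsilon(f,c) = \mu((\cE_f)_\epsilon)$ then closes both inequalities and gives the claimed equation.
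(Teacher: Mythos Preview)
Your argument is correct and matches the approach the paper sketches: the paper does not give its own proof of this lemma but attributes it to \citet{mahloujifar2019curse}, noting precisely the two identifications you use, namely $\Risk(f,c)=\mu(\cE_f)$ and $\AdvRisk_\epsilon(f,c)=\mu((\cE_f)_\epsilon)$. Your two-direction argument via the bijection $f\leftrightarrow\cE_f$ is exactly the intended reasoning.
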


Lemma \ref{lem:connect} suggests that the concentration function of the input metric probability space $h(\mu, \alpha, \epsilon)$ can be translated into an adversarial robustness upper bound that applies to any classifier with risk at least $\alpha$. If this upper bound is shown to be small, then one can conclude that it is impossible to learn an adversarially robust classifier, as long as the learned classifier has risk at least $\alpha$.

\shortsection{Concentration without Labels Mischaracterizes Intrinsic Robustness}
Despite the appealing relationship between concentration of measure and intrinsic robustness, we argue that solving the standard concentration problem is not enough  to capture a meaningful intrinsic limit for adversarially robust classification. 
The standard concentration of measure problem \eqref{eq:opt concentration}, which aims to find the optimal subset that has the smallest $\epsilon$-expansion with regard to the input metric probability space $(\cX,\mu,\Delta)$, does not involve the concept function $c(\cdot)$ that determines the underlying class label of each input. Therefore, no matter how we assign the labels to the inputs, the concentration function $h(\mu,\alpha,\epsilon)$ will remain the same for the considered metric probability space.
In sharp contrast, learning an adversarially-robust classifier depends on the joint distribution of both the inputs and the labels.

Moreover, when the standard concentration function is translated into an intrinsic limit of adversarial robustness, it is defined with respect to the set of imperfect classfiers $\cF_\alpha$ (see Lemma \ref{lem:connect}).
The only restriction imposed by $\cF_\alpha$ is that the classifier (or equivalently, the measure of the corresponding error region) has risk at least $\alpha$. This fails to consider whether the classifier is learnable or not under the given classification problem. Therefore, the intrinsic robustness limit implied by standard concentration $\overline{\AdvRob}_{\epsilon}(\cF_\alpha, c)$ could be much higher than $\overline{\AdvRob}_{\epsilon}(\cF_{\textrm{learn}}, c)$, where $\cF_{\textrm{learn}}$ denotes the set of classifiers that can be produced by some supervised learning method. 
Hence, it is not surprising that \citet{mahloujifar2019empirically} found that the adversarial robustness attained by state-of-the-art robust training methods for several image benchmarks is much lower than the intrinsic robustness limit implied by standard concentration of measure. In this work, to obtain a more meaningful intrinsic robustness limit we restrict the search space of the standard concentration problem \eqref{eq:opt concentration} by considering both the underlying class labels and the learnability of the given classification problem.

\shortsection{Gaussian Mixture Model}
We further illustrate the insufficiency of standard concentration under a simple Gaussian mixture model. 
Let $\cX\subseteq\RR^n$ be the input space and $\cY=\{-1, +1\}$ be the label space. Assume all the inputs are first generated according to a mixture of 2-Gaussian distribution:
$\bx\sim \mu=\frac{1}{2}\cN(-\btheta,\sigma^2\Ib_n) + \frac{1}{2}\cN(\btheta,\sigma^2\Ib_n)$, then labeled by a concept function $c(\bx)=\sgn(\btheta^{\top}\bx)$, where $\btheta\in\RR^n$ and $\sigma\in\RR$ are given parameters (this concept function is also the Bayes optimal classifier, which best separates the two Gaussian clusters). Theorem \ref{thm:gaussian mix}, proven in Appendix \ref{sec:proof gaussian mix}, characterizes the optimal solution to the standard concentration problem under this assumed model.

\begin{theorem}
\label{thm:gaussian mix}
Consider the above Gaussian mixture model with $\ell_2$ perturbation metric. The optimal solution to the standard concentration problem \eqref{eq:opt concentration} is a halfspace, either 
\begin{align*}
    \cH_{-}=\{\bx\in\cX: \btheta^{\top}\bx + b\cdot {\|\btheta\|_2} \leq 0\} \quad\text{or}\quad \cH_{+}=\{\bx\in\cX: \btheta^{\top}\bx - b\cdot{\|\btheta\|_2} \geq 0\},
\end{align*}
where $b$ is a parameter depending on $\alpha$ and $\btheta$ such that $\mu(\cH_{-})=\mu(\cH_{+}) = \alpha$.
\end{theorem}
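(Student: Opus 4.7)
The plan is to (i) apply the Gaussian isoperimetric inequality separately to each Gaussian component of $\mu$ to obtain a lower bound on $\mu(\cE_\epsilon)$ that is tight on halfspaces, (ii) characterize the feasible set of pairs $(\mu_-(\cE),\mu_+(\cE))$ using Lyapunov convexity and Neyman--Pearson, and (iii) minimize the resulting lower bound over this feasible set by exploiting strict concavity of the Gaussian isoperimetric profile.

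Write $\mu = \tfrac{1}{2}\mu_- + \tfrac{1}{2}\mu_+$ with $\mu_\pm = \cN(\pm\btheta,\sigma^2\bI_n)$. I would first apply the Gaussian isoperimetric inequality to each component to obtain, for every Borel $\cE$,
\[
\mu(\cE_\epsilon) \;\geq\; \tfrac{1}{2}\bigl[g(\mu_-(\cE)) + g(\mu_+(\cE))\bigr], \qquad g(a) := \Phi\bigl(\Phi^{-1}(a)+\epsilon/\sigma\bigr),
\]
with equality whenever $\cE$ is a halfspace (since halfspaces are the Gaussian isoperimetric extremizers). A short chain-rule computation gives $g''(a) = -(\epsilon/\sigma)\,\phi(\Phi^{-1}(a)+\epsilon/\sigma)/\phi(\Phi^{-1}(a))^2 < 0$ for $\epsilon > 0$, so $g$ is strictly concave on $(0,1)$ and the map $(p_-,p_+)\mapsto g(p_-)+g(p_+)$ is jointly concave.

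Next I would describe the feasible set $R := \{(\mu_-(\cE),\mu_+(\cE)) : \cE \text{ Borel}\}\subseteq[0,1]^2$. Lyapunov's convexity theorem (both $\mu_\pm$ are atomless) shows $R$ is convex, while the Neyman--Pearson lemma applied to the monotonic likelihood ratio $\mathrm{d}\mu_+/\mathrm{d}\mu_-(\bx) = \exp(2\btheta^{\top}\bx/\sigma^2)$ shows that the upper and lower boundary curves of $R$ are swept out by the halfspace families $\{\bx:\btheta^{\top}\bx\geq c\}$ and $\{\bx:\btheta^{\top}\bx\leq c\}$, respectively. Imposing $\mu(\cE)=\alpha$ restricts $(\mu_-(\cE),\mu_+(\cE))$ to the line segment $L_\alpha := R\cap\{p_-+p_+=2\alpha\}$, and concavity of $g(p_-)+g(p_+)$ forces its minimum over $L_\alpha$ to be attained at one of the two endpoints of this segment. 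By the boundary characterization together with the measure constraint, these endpoints are exactly the halfspaces $\cH_-$ and $\cH_+$ from the theorem statement, and the reflection $\bx\mapsto-\bx$ (which exchanges $\mu_-$ and $\mu_+$) shows that both endpoints realize the same value of the lower bound.

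The case $\mu(\cE)>\alpha$ reduces to the tight-constraint case by a monotonicity argument: enlarging the target measure only enlarges $\cH_-$ and hence its $\epsilon$-expansion, so the minimum is attained when $\mu(\cE)=\alpha$. The main obstacle I anticipate is step (ii) --- making rigorous, via Lyapunov convexity plus Neyman--Pearson, that the boundary of $R$ is exactly traced by the two halfspace families perpendicular to $\btheta$, and matching the resulting endpoint halfspace to the specific offset $b\|\btheta\|_2$ appearing in the theorem via the equation $\mu(\cH_\pm)=\alpha$. Everything else (the Gaussian isoperimetric bound, concavity of $g$, and the final monotonicity step) should be essentially routine.
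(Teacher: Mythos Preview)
Your proposal is correct and follows the same architecture as the paper's proof: apply the Gaussian isoperimetric inequality componentwise, identify the feasible pairs $(\mu_-(\cE),\mu_+(\cE))$ on the line $p_-+p_+=2\alpha$, and minimize the resulting concave function over that segment to land on its endpoints. The differences are only in packaging. Where you invoke Lyapunov convexity and Neyman--Pearson to locate the segment's endpoints, the paper computes the likelihood ratio $f_+/f_- = \exp(2\btheta^\top\bx/\sigma^2)$ directly and uses a short swap argument (inequalities \eqref{eq:ratio}--\eqref{eq:sum}) to show that $\cH_-$ maximizes $\mu_-$ and minimizes $\mu_+$ among sets of $\mu$-measure $\alpha$; it then adds a continuity/rotation argument to show that every intermediate pair $(\alpha_-,\alpha_+)$ is also realized by some halfspace, a step your approach does not actually need once you know the endpoints themselves are halfspaces. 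For the concavity, you compute $g''<0$ while the paper differentiates the symmetrized map $u\mapsto g(u)+g(2\alpha-u)$ and reads off its increasing-then-decreasing shape. Your version is slightly more conceptual and trims one step; the paper's is more self-contained, and its likelihood-ratio swap is precisely the hands-on resolution of the obstacle you flag in step~(ii).
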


\begin{remark}
Theorem \ref{thm:gaussian mix} suggests that for the Gaussian mixture model, the optimal subset achieving the smallest $\epsilon$-expansion under $\ell_2$-norm distance metric is a halfspace $\cH$, which is far away from the boundary between the two Gaussian classes for small $\alpha$.
When translated into the intrinsic robustness problem, the corresponding optimal classifier $f$ has to be constructed by treating $\cH$ as the only error region, or more precisely $f(\bx) = c(\bx)$ if $\bx\notin\cH$; $f(\bx) \neq c(\bx)$ otherwise. This optimally constructed classifier $f$, however, does not match our intuition of what a predictive classifier would do under the considered Gaussian mixture model.
In particular, since all the inputs in $\cH$ and their neighbours share the same class label and are also far away from the boundary, examples that fall into $\cH$ should be easily classified correctly using simple decision rule, such as k-nearest neighbour or maximum margin, whereas examples that are close to the boundary should be more likely to be misclassified as errors by supervisedly-learned classifiers. This confirms our claim that standard concentration is not sufficient for capturing a meaningful intrinsic robustness limit.

\end{remark}
\section{Incorporating Label Uncertainty in Intrinsic Robustness}
\label{sec:label uncertainty}

In this section, we first propose a new concentration estimation framework by imposing a constraint based on label uncertainty (Definition \ref{def:label uncertainty}) on the search space with respect to the standard problem \eqref{eq:opt concentration}. Then, we explain why this yields a more realistic intrinsic robustness limit. 

Let $(\cX,\mu)$ be the input probability space and $\cY=\{1,2,\ldots,k\}$ be the set of labels. $\eta:\cX\rightarrow[0,1]^k$ is said to capture the \emph{full label distribution}~\citep{geng2016label,gao2017deep}, if $[\eta(\bx)]_y$ corresponds to the description degree of $y$ to $\bx$ for any $\bx\in\cX$ and $y\in\cY$, and $\sum_{y\in[k]} [\eta(\bx)]_y = 1$ holds for any $\bx\in\cX$. For classification tasks that rely on human labeling, one can approximate the label distribution for any input by collecting human labels from multiple human annotators. Our experiments use the CIFAR-10H dataset that did this for the CIFAR-10 test images~\citep{peterson2019human}. 

For any subset $\cE\in\pow{(\cX)}$, we introduce \emph{label uncertainty} to capture the average uncertainty level with respect to the label assignments of the inputs within $\cE$:

\begin{definition}[Label Uncertainty] 
\label{def:label uncertainty}
Let $(\cX,\mu)$ be the input probability space and $\cY=\{1,2,\ldots,k\}$ be the complete set of class labels. Suppose $c:\cX\rightarrow\cY$ is a concept function that assigns each input $\bx$ a label $y\in\cY$. Assume $\eta:\cX\rightarrow[0,1]^{k}$ is the underlying label distribution function, where $[\eta(\bx)]_y$ represents the description degree of $y$ to $\bx$. For any subset $\cE\in\pow{(\cX)}$ with measure $\mu(\cE)>0$, the \emph{label uncertainty} ($\LU$)  of $\cE$ with respect to $(\cX, \mu)$, $c(\cdot)$ and $\eta(\cdot)$ is defined as:
\begin{align*}
    \LU(\cE; \mu, c, \eta) &= \frac{1}{\mu(\cE)}\int_\cE \Big\{1 - \big[\eta(\bx)\big]_{c(\bx)} + \max_{y' \neq c(\bx)} \big[\eta(\bx)\big]_{y'}\Big\}\:d\mu.
\end{align*}
\end{definition}
We define $\LU(\cE; \mu, c, \eta)$ as the average label uncertainty for all the examples that fall into $\cE$, where $1 - [\eta(\bx)]_{c(\bx)} + \max_{y' \neq c(\bx)} [\eta(\bx)]_{y'}$ represents the label uncertainty of a single example $\{\bx, c(\bx)\}$. 
The range of label uncertainty is $[0,2]$. For a single input, label uncertainty of $0$ suggests the assigned label fully captures the underlying label distribution; label uncertainty of $1$ means there are other classes as likely to be the ground-truth label as the assigned label; label uncertainty of $2$ means the input is mislabeled and there is a different label that represents the ground-truth label.
Based on the notion of label uncertainty, we study the following constrained concentration problem:
\begin{align}
\label{eq:opt concentration label uncertainty}
    \minimize_{\cE\in\pow(\cX)}\: \mu(\cE_\epsilon) \quad\text{subject to}\quad \mu(\cE)\geq \alpha \:\:\text{and}\:\: \LU(\cE; \mu, c, \eta)\geq \gamma,
\end{align}
where $\gamma\in[0,2]$ is a constant. When $\gamma$ is set as zero, \eqref{eq:opt concentration label uncertainty} simplifies to the standard concentration of measure problem. In this work, we set the value of $\gamma$ to roughly represent the label uncertainty of the error region  of state-of-the-art classifiers for the given classification problem.

Theorem \ref{thm:connection new}, proven in Appendix \ref{sec:proof connection new}, shows how \eqref{eq:opt concentration label uncertainty} captures the intrinsic robustness limit with respect to the set of imperfect classifiers whose error region label uncertainty is at least $\gamma$. 

\begin{theorem}
\label{thm:connection new}
Define $\cF_{\alpha,\gamma} = \{f:\Risk(f,c)\geq \alpha,\: \LU(\cE_f;\mu, c, \eta)\geq\gamma \}$, where $\alpha\in(0,1)$, $\gamma\in(0,2)$ and $\cE_f = \{\bx\in\cX: f(\bx)\neq c(\bx)\}$ is the error region of $f$. For any $\epsilon\geq 0$, it holds that 
$$
\inf_{\cE\in\pow(\cX)}\{\mu(\cE_\epsilon): \mu(\cE)\geq\alpha,\: \LU(\cE; \mu, c, \eta)\geq \gamma\} = 1 -
\overline\AdvRob_\epsilon(\cF_{\alpha,\gamma}, c).
$$
\end{theorem}
Compared with standard concentration, \eqref{eq:opt concentration label uncertainty} aims to search for the least expansive subset with respect to input regions with high label uncertainty. According to Theorem \ref{thm:connection new}, the translated intrinsic robustness limit is defined with respect to $\cF_{\alpha, \gamma}$ and is guaranteed to be no greater than $\overline{\AdvRob}_\epsilon(\cF_{\alpha})$. Although both $\overline{\AdvRob}_{\epsilon}(\cF_{\alpha})$ and $\overline{\AdvRob}_{\epsilon}(\cF_{\alpha,\gamma})$ can serve as valid robustness upper bounds for any $f\in\cF_{\alpha,\gamma}$, the latter one would be able to capture a more meaningful intrinsic robustness limit, since state-of-the-art classifiers are expected to more frequently misclassify inputs with large label uncertainty, as there is more discrepancy between their assigned labels and the underlying label distribution (Section \ref{sec:exp} provides supporting empirical evidence for this on CIFAR-10). 

\xnote{I will rewrite the above paragraph and include some discussions in our response regarding why our definition is more accurate. In addition, I will use `desirable-learned classifiers' instead of the state-of-the-art classifiers in this section, and try to clarify what it means by a desirable-learned classifier (learnable, while achieving good prediction accuracy) here.}

\shortsection{Need for Soft Labels}
The proposed approach requires label uncertainty information for training examples. The CIFAR-10H dataset provided soft labels from humans that enabled our experiments, but typical machine learning datasets do not provide such information. Below, we discuss possible avenues to estimating label uncertainty when human soft labels are not available and are too expensive to acquire.
A potential solution is to estimate the set of examples with high label uncertainty using the predicted probabilities of a classification model. Confident learning \citep{natarajan2013learning, lipton2018detecting, huang2019o2u, northcutt2021confident, northcutt2021pervasive} provides a systematic method to identify label errors in a dataset based on this idea. If the estimated label errors match the examples with high human label uncertainty, then we can directly extend our framework by leveraging the estimated error set. Our experiments on CIFAR-10 (see Appendix \ref{sec:extension}), however, suggest that there is a misalignment between human recognized errors and errors produced by confident learning. 
The existence of such misalignment further suggests that one should be cautious when combining the estimated set of label errors into our framework. As the field of confident learning advances to produce a more accurate estimator of label error set, it would serve as a good alternative solution for applying our framework to the setting where human label information is not accessible.
\section{Measuring Concentration with Label Uncertainty Constraints}
\label{sec:algorithm}

Directly solving \eqref{eq:opt concentration label uncertainty} requires the knowledge of the underlying input distribution $\mu$ and the ground-truth label distribution function $\eta(\cdot)$, which are usually not available for classification problems. Thus, we consider the following empirical counterpart of \eqref{eq:opt concentration label uncertainty}: 
\begin{align}
\label{eq:empirical opt concentration label uncertainty}
    \minimize_{\cE\in\cG}\: \hat\mu_{\cS}(\cE_\epsilon) \quad\text{subject to}\quad \hat\mu_{\cS}(\cE)\geq \alpha \:\:\text{and}\:\: \LU(\cE; \hat\mu_{\cS}, c, \hat\eta)\geq \gamma,
\end{align}
where the search space is restricted to some specific collection of subsets $\cG\subseteq\pow{(\cX)}$, $\mu$ is replaced by the empirical distribution $\hat\mu_\cS$ with respect to a set of inputs sampled from $\mu$, and the empirical label distribution $\hat\eta(\bx)$ is considered as an empirical replacement of $\eta(\bx)$ for any given input $\bx\in\cS$.

Theorem \ref{thm:generalization lu}, proven in Appendix \ref{sec:proof generalization lu}, characterizes a generalization bound regarding the proposed label uncertainty estimate. It shows that if $\cG$ is not too complex and $\hat\eta$ is close to the ground-truth label distribution function $\eta$, the empirical estimate of label uncertainty $\LU(\cE; \hat\mu_{\cS}, c, \hat\eta)$ is guaranteed to be close to the actual label uncertainty $\LU(\cE; \mu, c, \eta)$. The formal definition of the complexity penalty with respect to a collection of subsets is given in Appendix \ref{sec:formal def}. 
\begin{theorem}[Generalization of Label Uncertainty]
\label{thm:generalization lu}
Let $(\cX,\mu)$ be a probability space and $\cG\subseteq\pow{(\cX)}$ be a collection of subsets of $\cX$. Assume $\phi:\NN\times\RR \rightarrow [0,1]$ is a complexity penalty for $\cG$. If $\hat\eta(\cdot)$ is close to $\eta(\cdot)$ in $L^1$-norm with respect to $\mu$, i.e. $\int_{\cX}\|\eta(\bx)-\hat\eta(\bx)\|_1 d\mu \leq \delta_{\eta}$, where $\delta_\eta\in(0,1)$ is a small constant, then for any $\alpha, \delta\in(0,1)$ such that $\delta<\alpha$, we have
\begin{align*}
    \Pr_{\cS\leftarrow\mu^m}\bigg[\exists\:\cE\in\cG \text{ and } \mu(\cE)\geq \alpha: \big|\LU(\cE; \mu, c, \eta) - \LU(\cE; \hat\mu_{\cS}, c, \hat\eta)\big| \leq \frac{4\delta + \delta_\eta}{\alpha-\delta}\bigg] \leq \phi(m,\delta).
\end{align*}
\end{theorem}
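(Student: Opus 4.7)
My plan is to condition on the high-probability uniform convergence event supplied by the complexity penalty, and then use the triangle inequality to split the target difference into a \emph{label-distribution} mismatch (swapping $\eta$ for $\hat\eta$ under the common reference measure $\mu$) and an \emph{empirical-measure} mismatch (swapping $\mu$ for $\hat\mu_\cS$ under the common labeling $\hat\eta$). Each piece is then bounded separately.

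By Definition \ref{def:complexity penalty}, the event $\cA = \{\forall\,\cE\in\cG:\;|\hat\mu_\cS(\cE) - \mu(\cE)|\leq \delta\}$ has probability at least $1-\phi(m,\delta)$, and I condition on it throughout. For any $\cE\in\cG$ with $\mu(\cE)\geq \alpha$, this immediately gives $\hat\mu_\cS(\cE)\geq \alpha-\delta>0$, so both $\LU$ quantities are well-defined. Writing $g_\eta(\bx) := 1 - [\eta(\bx)]_{c(\bx)} + \max_{y'\neq c(\bx)}[\eta(\bx)]_{y'} \in [0,2]$, I set $T_1 := |\LU(\cE;\mu,c,\eta) - \LU(\cE;\mu,c,\hat\eta)|$ and $T_2 := |\LU(\cE;\mu,c,\hat\eta) - \LU(\cE;\hat\mu_\cS,c,\hat\eta)|$; the triangle inequality reduces the claim to bounding $T_1 + T_2$.

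For $T_1$, a pointwise calculation using $|d_{c(\bx)}| + \max_{y'\neq c(\bx)}|d_{y'}| \leq \sum_y|d_y|$ for $d := \eta(\bx)-\hat\eta(\bx)$ gives $|g_\eta(\bx) - g_{\hat\eta}(\bx)| \leq \|\eta(\bx)-\hat\eta(\bx)\|_1$. Integrating over $\cE$, normalizing by $\mu(\cE)\geq \alpha$, and invoking the $L^1$-closeness assumption yields $T_1 \leq \delta_\eta/\alpha \leq \delta_\eta/(\alpha-\delta)$.

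For $T_2$, I set $A := \int_\cE g_{\hat\eta}\,d\mu$, $\hat A := \int_\cE g_{\hat\eta}\,d\hat\mu_\cS$, $B := \mu(\cE)$, $\hat B := \hat\mu_\cS(\cE)$ and use the algebraic identity $A/B - \hat A/\hat B = (A-\hat A)/\hat B - (A/B)(\hat B - B)/\hat B$. Here $A/B\leq 2$ because $g_{\hat\eta}\in[0,2]$, and event $\cA$ already supplies $|\hat B - B|\leq \delta$ together with $\hat B \geq \alpha-\delta$, so the remaining task is to control $|A-\hat A|$. The clean way to do this is the layer-cake identity $\int_\cE g_{\hat\eta}\,d\nu = \int_0^2 \nu(\cE \cap \{g_{\hat\eta}\geq t\})\,dt$ for $\nu\in\{\mu,\hat\mu_\cS\}$: provided the level sets $\cE \cap \{g_{\hat\eta}\geq t\}$ still lie in $\cG$, event $\cA$ bounds each integrand by $\delta$ uniformly in $t\in[0,2]$, delivering $|A-\hat A|\leq 2\delta$. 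This closure requirement on $\cG$ is the main technical obstacle: the theorem leaves it implicit, and rigor demands that $\cG$ be closed under intersections with the (fixed) level sets of $g_{\hat\eta}$, or that $\phi$ be known to be a complexity penalty for this enlarged family. Granting this, substitution gives $T_2 \leq 2\delta/(\alpha-\delta) + 2\delta/(\alpha-\delta) = 4\delta/(\alpha-\delta)$, and adding $T_1$ yields the advertised bound $(4\delta + \delta_\eta)/(\alpha-\delta)$.
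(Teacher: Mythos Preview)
Your decomposition and bounding strategy match the paper's: it splits into three terms $I_1,I_2,I_3$ (normalization mismatch, $\eta\to\hat\eta$ under $\mu$, and $\mu\to\hat\mu_\cS$ on the integral) that correspond exactly to your $T_1$ and the two pieces of your $T_2$, with the same individual bounds summing to $(4\delta+\delta_\eta)/(\alpha-\delta)$. The one substantive difference is at the step you rightly flag as an obstacle. For its $I_3$ the paper simply asserts
\[
\Bigl|\int_\cE g_{\hat\eta}\,d\mu - \int_\cE g_{\hat\eta}\,d\hat\mu_\cS\Bigr|\;\le\;2\,\bigl|\mu(\cE)-\hat\mu_\cS(\cE)\bigr|
\]
from $0\le g_{\hat\eta}\le 2$ alone, which is not a valid inequality in general (two measures can agree on $\cE$ yet weight $g_{\hat\eta}$ very differently inside it). Your layer-cake argument makes explicit what is actually needed---closure of $\cG$ under intersection with the level sets of $g_{\hat\eta}$, or a complexity penalty for that enlarged family---so the gap you identify is genuine, and the paper's own proof does not close it; your treatment is the more careful one on this point.
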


\begin{remark}
\label{rem:generalization concentration}
Theorem \ref{thm:generalization lu} implies the generalization of concentration under label uncertainty constraints (see Theorem \ref{thm:generalization concentration} for a formal argument of this and its proof in Appendix \ref{sec:proof generalization concentration}). 
If we choose $\cG$ and the collection of its $\epsilon$-expansions, $\cG_\epsilon = \{\cE_\epsilon: \cE\in\cG\}$, in a careful way that both of their complexities are small, then with high probability, the empirical label uncertainty constrained concentration will be close to the actual concentration when the search space is restricted to $\cG$.

Moreover, define $h(\mu, c, \eta, \alpha, \gamma, \epsilon, \cG) = \inf_{\cE\in\cG}\{\mu(\cE_\epsilon): \mu(\cE)\geq\alpha, \LU(\cE;\mu,c,\eta)\geq\gamma\}$ as the generalized concentration function under label uncertainty constraints. Then, based on a similar proof technique used for Theorem 3.5 in \citet{mahloujifar2019empirically}, we can further show that if $\cG$ also satisfies a universal approximation property, then with probability $1$,
\begin{align}
\label{eq:actual convergence}
    h(\mu, c, \eta, \alpha, \gamma-\delta_\eta/\alpha, \epsilon)  \leq \lim_{T\rightarrow\infty} h(\mu_{\cS_T}, c, \hat\eta, \alpha, \gamma, \epsilon, \cG(T)) \leq h(\mu, c, \eta, \alpha, \gamma+\delta_\eta/\alpha, \epsilon),
\end{align}
where $T$ stands for the complexity of $\cG$ and $\cS_T$ denotes a set of samples of size $m(T)$. Appendix \ref{sec:proof remark generalization concentration} provides a formal argument and proof for \eqref{eq:actual convergence}. It is worth noting that \eqref{eq:actual convergence} suggests that if we increase both the complexity of the collection of subsets $\cG$ and the number of samples used for the empirical estimation, the optimal value of the empirical concentration problem \eqref{eq:empirical opt concentration label uncertainty} will converge to the actual concentration function with an error limit of $\delta_\eta/\alpha$ on parameter $\gamma$. When the difference between the empirical label distribution $\hat\eta(\cdot)$ and the underlying label distribution $\eta(\cdot)$ is negligible, it is guaranteed that the optimal value of \eqref{eq:empirical opt concentration label uncertainty} asymptoptically converges to that of \eqref{eq:opt concentration label uncertainty}.
\end{remark}

\shortsection{Concentration Estimation Algorithm}
Although Remark \ref{rem:generalization concentration} provides a general idea how to choose $\cG$ for measuring concentration, it does not indicate how to solve the empirical concentration problem \eqref{eq:empirical opt concentration label uncertainty} for a specific perturbation metric. This section presents a heuristic algorithm for estimating the least-expansive subset for optimization problem \eqref{eq:empirical opt concentration label uncertainty}
when the metric is $\ell_2$-norm or $\ell_\infty$-norm.
We choose $\cG$ as a union of balls for the $\ell_2$-norm distance metric and set $\cG$ as a union of hypercubes for $\ell_\infty$-norm (see Appendix \ref{sec:formal def} for the formal definition of union of $\ell_p$-balls). 
It is worth noting that such choices of $\cG$ satisfy the condition required for Theorem \ref{thm:formal rem gen concentration},
since they are universal approximators for any set and the VC-dimensions of both $\cG$ and $\cG_\epsilon$ are both bounded (see \citet{eisenstat2007vc} and \citet{devroye2013probabilistic}).

The remaining task is to solve \eqref{eq:empirical opt concentration label uncertainty} based on the selected $\cG$.
Following \citet{mahloujifar2019empirically}, we place the balls for $\ell_2$ (or the hypercubes for $\ell_\infty$) in a sequential manner, and search for the best placement that satisfies the label uncertainty constraint using a greedy approach. Algorithm~\ref{alg:search} in Appendix \ref{sec:alg pseudocode} gives pseudocode for the search algorithm. It initializes the feasible set of the hyperparmeters $\Omega$ as an empty set for each placement of balls (or hypercubes), then enumerates all the possible initial placements, $\cS_\init(\bu, k)$, such that its empirical label uncertainty exceeds the given threshold $\gamma$. Finally, among all the feasible ball (or hypercube) placements, it records the one that has the smallest $\epsilon$-expansion with respect to the empirical measure $\hat\mu_{\cS}$. In this way, the input region produced by Algorithm \ref{alg:search} serves as a good approximate solution to the empirical problem \eqref{eq:empirical opt concentration label uncertainty}.

\section{Experiments}
\label{sec:exp}

We conduct experiments on the CIFAR-10H dataset \citep{peterson2019human}, which contains soft labels reflecting human perceptual uncertainty for the 10,000 CIFAR-10 test images \citep{krizhevsky2009learning}. These soft labels can be regarded as an approximation of the label distribution function $\eta(\cdot)$ at each given input, whereas the original CIFAR-10 test dataset provides the class labels given by the concept function $c(\cdot)$.
We report on experiments showing the connection between label uncertainty and classification error rates (Section~\ref{sec:exp validation}) and that incorporating label uncertainty enables better intrinsic robustness estimates \xnote{this seems to be a little confusing now, I will try to rewrite it.} (Section~\ref{sec:exp estimation cifar}). Section~\ref{sec:exp abstain} demonstrates the possibility of improving model robustness by abstaining for inputs in high label uncertainty regions. 

\xnote{Will add a note clarifying that the risk, adversarial risk and label uncertainty used in the experiments are all with respect to the empirical measure.}

\subsection{Error Regions have Larger Label Uncertainty}
\label{sec:exp validation}

\begin{figure}[tb]
  \centering
    \subfigure[Illustration of CIFAR-10 and CIFAR-10H]{
    \includegraphics[width=0.43\textwidth]{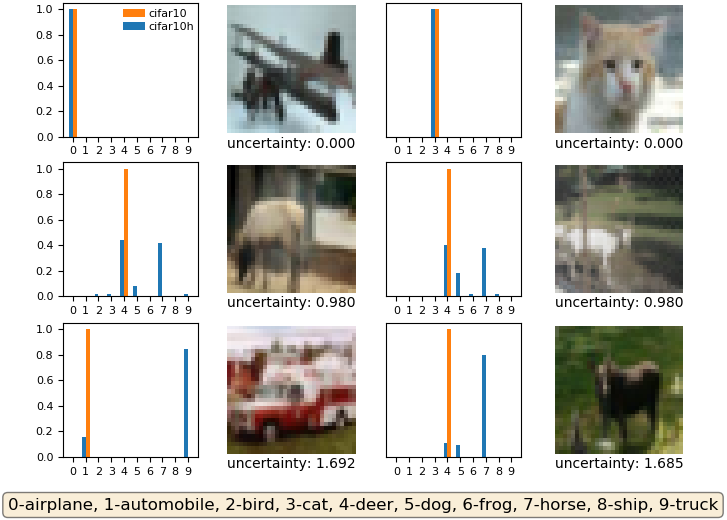}
    \label{fig:cifar10h vis}}
    \hspace{0.2in}
    \subfigure[Label Uncertainty Distribution]{
    \includegraphics[width=0.41\textwidth]{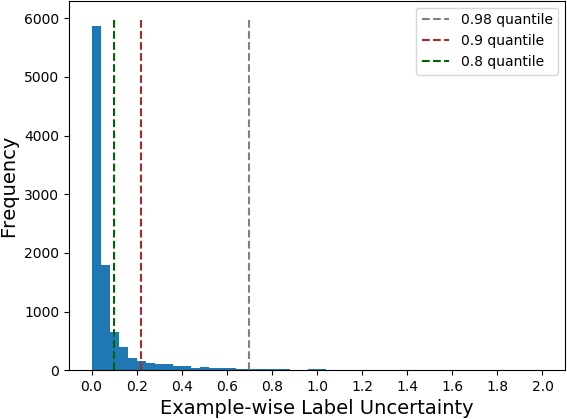}
    \label{fig:hist}}
  \vspace{-0.1in}
  \caption{(a) Visualization of the CIFAR-10 test images with the soft labels from CIFAR-10H, the original assigned labels from CIFAR-10 and the label uncertainty scores computed based on Definition \ref{def:label uncertainty}. (b) Histogram of the label uncertainty distribution for the CIFAR-10 test dataset.}
\vspace{-0.1in}
\label{fig:cifar10h}
\end{figure}

Figure \ref{fig:cifar10h vis} shows the label uncertainty scores for several images with both the soft labels from CIFAR-10H and the original class labels from CIFAR-10 (see Appendix \ref{sec:other exp} for more illustrations).
Images with low uncertainty scores are typically easier for humans to recognize their class category (first row of Figure \ref{fig:cifar10h vis}), whereas images with high uncertainty scores look ambiguous or even misleading (second and third rows). Figure \ref{fig:hist} shows the histogram of the label uncertainty distribution for all the $10,000$ CIFAR-10 test examples. In particular, more than $80\%$ of the examples have label uncertainty scores below $0.1$, suggesting the original class labels mostly capture the underlying label distribution well. However, around $2\%$ of the examples have label uncertainty scores exceeding $0.7$, and some $400$ images appear to be mislabeled with uncertainty scores above $1.2$.

We hypothesize that ambiguous or misleading images should also be more likely to be misclassified as errors by state-of-the-art machine learning classifiers. That is, their induced error regions should have larger that typical label uncertainty.
To test this hypothesis, we conduct experiments on CIFAR-10 and CIFAR-10H datasets. 
\xnote{Based on our response, I think we should make the hypothesis more general to any "desirably-leanrned classifier" instead of just state-of-the-art. For testing this hypothesis, I think it is still justifiable by mostly examining it on state-of-the-art classifiers, and reasonably infer the behavior of a desirably-learned classifier.}
More specifically, we  train different classification models, including intermediate models extracted at different epochs, using the CIFAR-10 training dataset, then empirically compute the standard risk, adversarial risk, and label uncertainty of the corresponding error region. The results are shown in Figure \ref{fig:label uncertainty vis} (see Appendix \ref{sec:details} for experimental details).

\begin{figure}[tb]
  \centering
    \subfigure[Standard Training]{
    \includegraphics[width=0.3\textwidth]{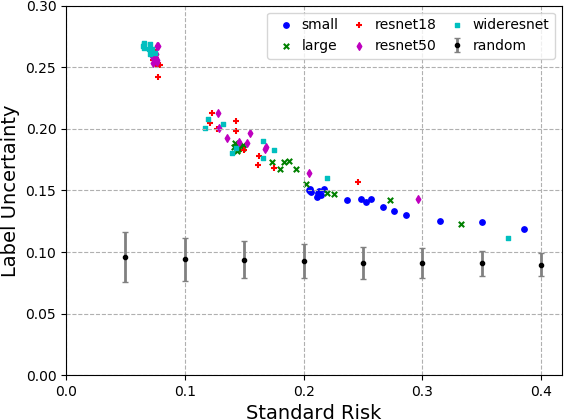}
    \label{fig:measure vs uncertainty std}}
    \hspace{0.1in}
    \subfigure[Adversarial Training]{
    \includegraphics[width=0.3\textwidth]{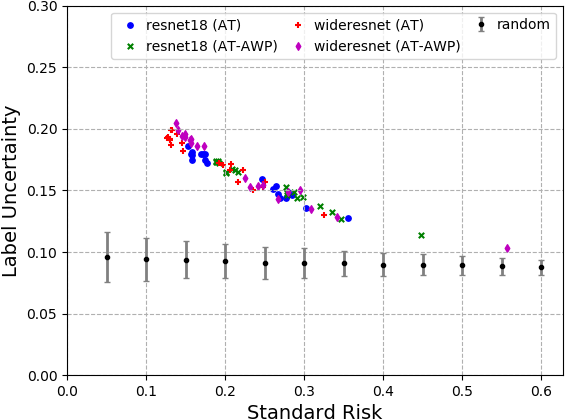}
    \label{fig:measure uncertainty adv}}
    \hspace{0.1in}
    \subfigure[RobustBench]{
    \includegraphics[width=0.3\textwidth]{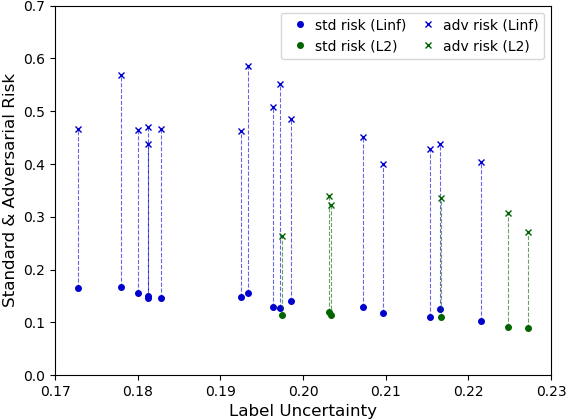}
    \label{fig:robustbench}}
  \vspace{-0.1in}
  \caption{Visualizations of error region label uncertainty versus standard risk and adversarial risk with respect to classifiers produced by different machine learning methods: (a) Standard-trained classifiers with different network architecture; (b) Adversarially-trained classifiers using different learning algorithms;
  (c) State-of-the-art adversarially robust classification models from RobustBench.}
\label{fig:label uncertainty vis}
\end{figure}

Figures \ref{fig:measure vs uncertainty std} and \ref{fig:measure uncertainty adv} demonstrate the relationship between label uncertainty and standard risk for various classifiers produced by standard training and adversarial training methods under $\ell_\infty$ perturbations with $\epsilon=8/255$. In addition, we plot the label uncertainty with error bars of randomly-selected images from the CIFAR-10 test dataset as a reference. As the model classification accuracy increases, the label uncertainty of its induced error region increases, suggesting the misclassified examples tend to have higher label uncertainty.
This observation holds consistently for both standard and adversarially trained models with any tested network architecture. 
\xnote{Now I think we should highlight this relationship between standard risk and label uncertainty and can discuss a little bit more. Conceptually, as long as a supervised learning method aims to achieve lower risk, the decision boundary of its produced classifier should be closer and closer to the decision boundary of the concept function during the learning procedure (assume the concept function captures the underlying conditional distribution of $Y|X$ to some degree). On the other hand, our definition of label uncertainty should be able to capture which input is more likely to be mislabled by the concept function (image this corresponds to a single human annotator). That said, based on label uncertainty, we should be able to distinguish examples that lies near the decision boundary of the conception function from the others. This kind of explains why smaller standard risk implies higher error region label uncertainty at a high level, as the error region induced by a classifier with smaller risk is expected to be `closer' to the images with higher label uncertainty. This also relates to generalizability of our hypothesis to unseen `desirably-learned' classifiers in that if we expect desirably-learned classifiers to have small risk as a shared property, then the label uncertainty of their induced error regions should also be high, which justifies our hypothesis.}
Figure \ref{fig:robustbench} summarizes the error region label uncertainty with respect to the state-of-the-art adversarially robust models documented in RobustBench~\citep{croce2020robustbench}. Regardless of the perturbation type or the learning method, the average label uncertainty of their misclassified examples all falls into a range of $(0.17, 0.23)$, whereas the mean label uncertainty of all the testing CIFAR-10 data is less than $0.1$. This supports our hypothesis that error regions of state-of-the-art classifiers tend to have larger label uncertainty, and our claim that intrinsic robustness estimates should account for labels.

\begin{figure}[tb]
  \centering
    \subfigure[$\ell_\infty$ perturbations ($\epsilon=8/255$)]{
    \includegraphics[width=0.42\textwidth]{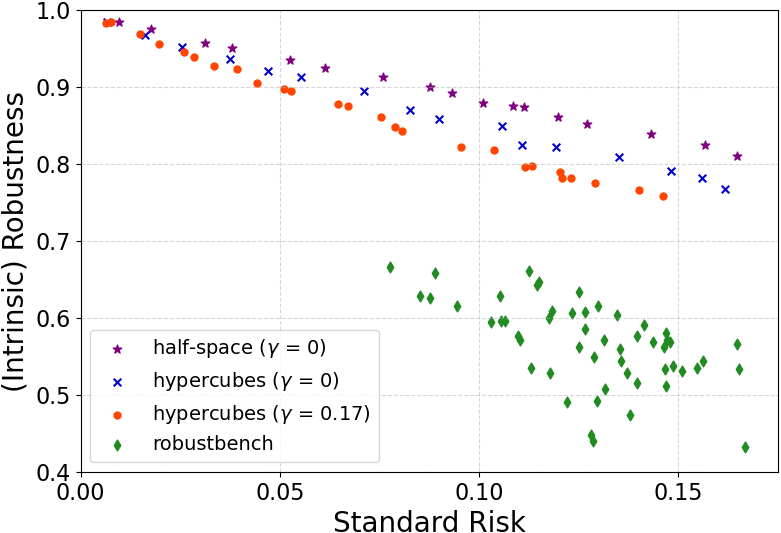}
    \label{fig:alpha curve linf}}
    \hspace{0.2in}
    \subfigure[$\ell_2$ perturbations ($\epsilon=0.5$)]{
    \includegraphics[width=0.42\textwidth]{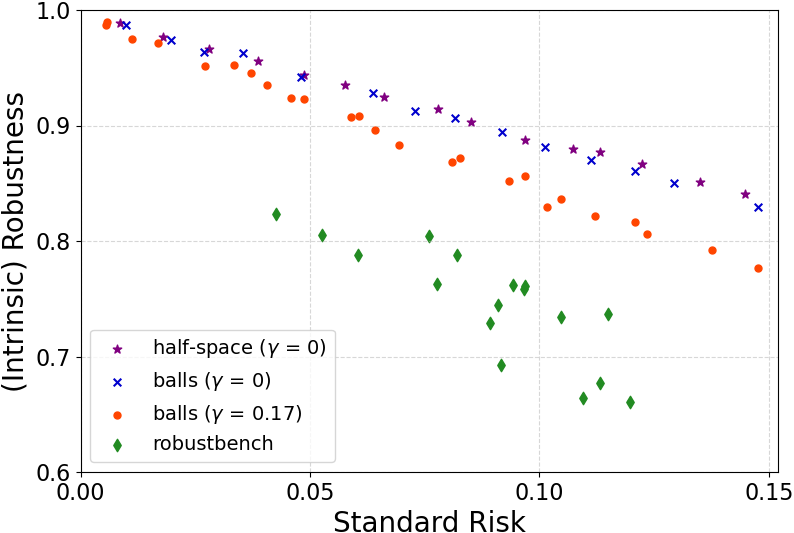}
    \label{fig:alpha curve l2}}
  \vspace{-0.1in}
  \caption{Estimated intrinsic robustness based on Algorithm \ref{alg:search} with $\gamma=0.17$ under (a) $\ell_\infty$ perturbations with $\epsilon=8/255$; and (b) $\ell_2$ perturbations with $\epsilon=0.5$. For comparison, we plot baseline estimates produced without considering label uncertainty using a half-space searching method \citep{prescott2021improved} and using union of hypercubes or balls (Algorithm \ref{alg:search} with $\gamma=0$). Robust accuracies achieved by state-of-the-art RobustBench models are plotted in green.}
\vspace{-0.1in}
\label{fig:alpha curve}
\end{figure}

\subsection{Empirical Estimation of Intrinsic Robustness}
\label{sec:exp estimation cifar}

\xnote{I will rewrite this section to clarify the purpose of this section and what the empirical results presented conveys. Also, will explain the updated figure 3, as well as why the halfspace-based method proposed in Prescott et al., (2021) is not directly applicable to our setting.}

In this section, we apply Algorithm \ref{alg:search} to estimate the intrinsic robustness limit for the CIFAR-10 dataset under $\ell_\infty$ perturbations with $\epsilon=8/255$ and $\ell_2$ perturbations with $\epsilon = 0.5$.
We set the label uncertainty threshold $\gamma=0.17$ to roughly represent the error region label uncertainty of state-of-the-art classification models (see Figure \ref{fig:label uncertainty vis}). In particular, we adopt a $50/50$ train-test split over the original $10,000$ CIFAR-10 test images (see Appendix \ref{sec:details} for experimental details).

Figure \ref{fig:alpha curve} shows our intrinsic robustness estimates with $\gamma=0.17$ when choosing different values of $\alpha$. We include the estimates of intrinsic robustness defined with $\cF_{\alpha}$ as a baseline, where no label uncertainty constraint is imposed ($\gamma = 0$). Results are shown both for our $\ell_p$-balls searching method and the half-space searching method in \citet{prescott2021improved}. We also plot the standard error and the robust accuracy of the state-of-the-art adversarially robust models in RobustBench \citep{croce2020robustbench}. For concentration estimation methods, the plotted values are the empirical measure of the returned optimally-searched subset ($x$-axis) and $1 -$ the empirical measure of its $\epsilon$-expansion ($y$-axis).

\begin{figure}[tb]
  \centering
    \subfigure[\citet{carmon2019unlabeled}]{
    \includegraphics[width=0.42\textwidth]{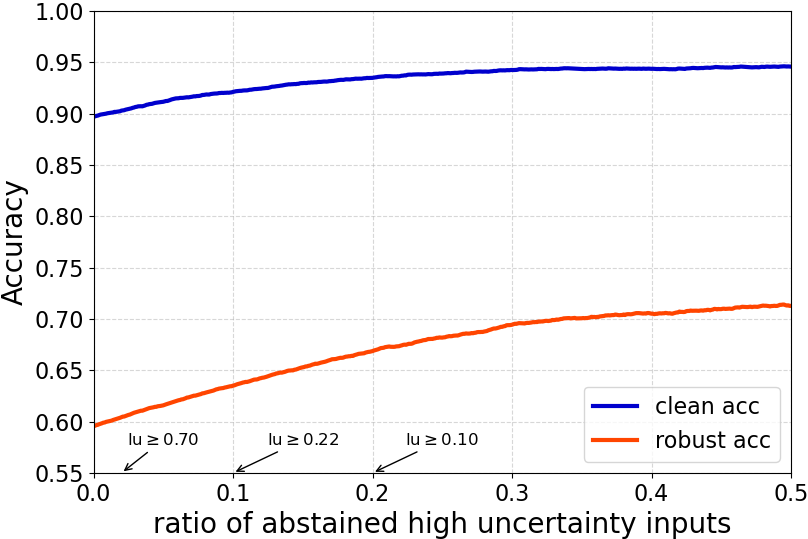}
    \label{fig:abstain carmon linf}}
    \hspace{0.2in}
    \subfigure[\citet{wu2020adversarial}]{
    \includegraphics[width=0.42\textwidth]{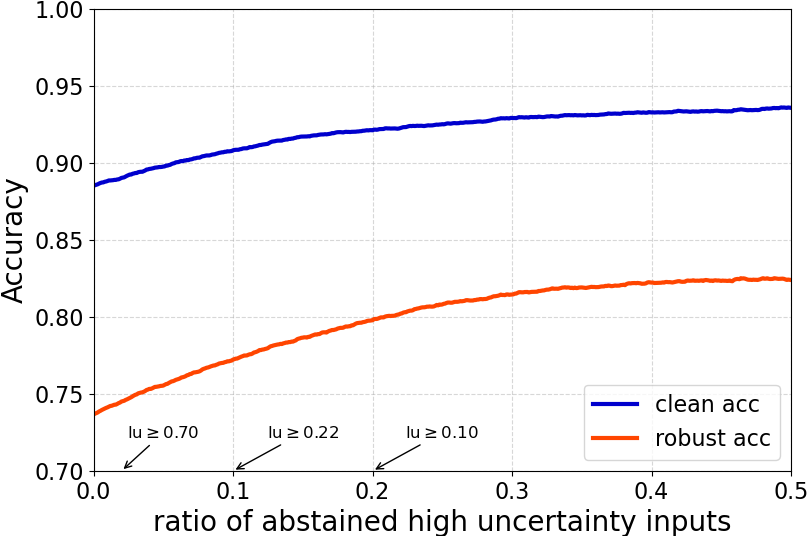}
    \label{fig:abstain wu l2}}
  \vspace{-0.1in}
  \caption{Accuracy curves for different adversarially-trained classifiers, varying the abstaining ratio of CIFAR-10 images with high label uncertainty score: (a) \citet{carmon2019unlabeled} for $\ell_\infty$ perturbations with  $\epsilon=8/255$; (b) \citet{wu2020adversarial} for $\ell_2$ perturbations with $\epsilon=0.5$. Corresponding cut-off values of label uncertainty are marked on the $x$-axis with respect to percentage values of $\{0.02, 0.1, 0.2\}$.}
\vspace{-0.1in}
\label{fig:abstain}
\end{figure}

Compared with the baseline estimates, our label-uncertainty constrained intrinsic robustness estimates are uniformly lower across all the considered settings (similar results are obtained under other experimental settings, see Table \ref{table:main exp results} in Appendix \ref{sec:other exp}). Although both of these estimates can serve as legitimate upper bounds on the maximum achievable adversarial robustness for the given task, our estimate, which takes data labels into account, being closer to the robust accuracy achieved by state-of-the-art classifiers indicates it is a more accurate characterization of intrinsic robustness limit. For instance, under $\ell_\infty$ perturbations with $\epsilon = 8/255$,  the best adversarially-trained classification model achieves $66\%$ robust accuracy with approximately $8\%$ clean error, whereas our estimate indicates that the maximum robustness one can hope for is about $82\%$ as long as the classification model has at least $8\%$ clean error. In contrast, the intrinsic robustness limit implied by standard concentration is as high as $90\%$ for the same setting, which again shows the insufficiency of standard concentration.

\subsection{Abstaining based on Label Uncertainty}
\label{sec:exp abstain}
Based on the definition of label uncertainty, and our experimental results in the previous subsections, we expect classification models to have higher accuracy on examples with low label uncertainty. Figure \ref{fig:abstain} shows the results of experiments to study the effect of abstaining based on label uncertainty on both clean and robust accuracies using adversarially-trained CIFAR-10 classification models from \citet{carmon2019unlabeled} ($\ell_\infty$, $\epsilon=8/255$) and \citet{wu2020adversarial} ($\ell_2$, $\epsilon=0.5$). We first sort all the test CIFAR-10 images based on label uncertainty, then evaluate the model performance with respect to different abstaining ratios of top uncertain inputs. The accuracy curves suggest that a potential way to improve the robustness of classification systems 
is to enable the classifier an option to abstain on examples with high uncertainty score.

For example, if we allow the robust classifier of \citet{carmon2019unlabeled} to abstain on the 2\% of the test examples whose label uncertainty exceeds 0.7, the clean accuracy improves from 89.7\% to 90.3\%, while the robust accuracy increases from 59.5\% to 60.4\%. This is close to the maximum robust accuracy that could be achieved with a 2\% abstention rate ($0.595/(1-0.02)=0.607$). 
This result points to abstaining on examples in high label uncertainty regions as a promising path towards achieving adversarial robustness.

\section{Conclusion}
Standard concentration fails to sufficiently capture intrinsic robustness since it ignores data labels. Based on the definition of label uncertainty, we observe that the error regions induced by state-of-the-art classification models all tend to have high label uncertainty. This motivates us to develop an empirical method to study the concentration behavior regarding the input regions with high label uncertainty, which results in more accurate intrinsic robustness measures for benchmark image classification tasks. Our experiments show the importance of considering labels in understanding intrinsic robustness, and further suggest that abstaining based on label uncertainty could be a potential method to improve the classifier accuracy and robustness.

\section*{Availability}
An implementation of our method, and code for reproducing our experiments, is available under an open source license from: \url{https://github.com/xiaozhanguva/intrinsic_rob_lu}.

\section*{Acknowledgements}
This work was partially funded by an award from the National Science Foundation (NSF) SaTC program (Center for Trustworthy Machine Learning, \#1804603). 

\section*{Ethics Statement}
Our work is primarily focused on deepening our understanding of intrinsic adversarial robustness limit and the main contributions in this paper are theoretical. Our work could potentially enable construction of more robust classification systems, as suggested by the results in Section~\ref{sec:exp abstain}. For most applications, such as autonomous vehicles and malware detection, improving the robustness of classifiers is beneficial to society. There may be scenarios, however, such as face recognition where uncertainty and the opportunity to confuse classifiers with adversarial perturbations may be useful, so enabling more robust classifiers in these domains may have negative societal impacts.

\section*{Reproducibility Statement}
Details of our experimental setup and methods are provided in Appendix \ref{sec:details}, and all of the datasets we use are publicly available. In addition, we state the assumptions for our theoretical results in each theorem. Detailed proofs of all the presented theorems are provided in Appendix \ref{sec:proof}.

\bibliography{adv_exps}
\bibliographystyle{iclr2022_conference}

\appendix

\section{Related Work}
\label{sec:related}

This section summarizes the work related to ours, beyond the brief background provided in the Introduction.
First, we discuss the line of research aiming to develop robust classification models against adversarial examples. Then, we introduce the line of works which focus on understanding the intrinsic robustness limit.

\subsection{Training Adversarially Robust Classifiers}
Witnessing the vulnerability of modern machine learning models to adversarial examples, extensive studies have been carried out aiming to build classification models that can be robust against adversarial perturbations. Heuristic defense mechanisms \citep{goodfellow2015explaining, papernot2016distillation, guo2017countering, xie2017mitigating,madry2017towards} had been most popular until many of them were broken by stronger adaptive adversaries \citep{athalye2018obfuscated, tramer2020adaptive}. The only scalable defense which seems to hold up well against adaptive adversaries is PGD-based adversarial training \citep{madry2017towards}.
Several variants of PGD-based adversarial training have been proposed, which either adopt different loss function \citep{zhang2019theoretically,wu2020adversarial} or make use of additional training data \citep{carmon2019unlabeled, alayrac2019labels}. Nevertheless, the best current adversarially-trained classifiers can only achieve around $65\%$ robust accuracy on CIFAR-10 against $\ell_\infty$ perturbations with strength $\epsilon = 8/255$, even with additional training data (see the leaderboard in \citet{croce2020robustbench}).

To end the arms race between heuristic defenses and newly designed adaptive attacks that break them, certified defenses have been developed based on different approaches, including linear programming \citep{wong2018provable, wong2018scaling}, semidefinite programming \citep{raghunathan2018certified}, interval bound propagation \citep{gowal2018effectiveness,zhang2020towards} and randomized smoothing \citep{cohen2019certified,li2019certified}. Although certified defenses are able to train classifiers with robustness guarantees for input instances, most defenses can only scale to small networks and they usually come with sacrificed empirical robustness, especially for larger adversarial perturbations.

\subsection{Theoretical Understanding on Intrinsic Robustness}
Given the unsatisfactory status quo of building adversarially robust classification models, a line of research \citep{gilmer2018adversarial, fawzi2018adversarial,mahloujifar2019curse,shafahi2018adversarial, dohmatob2018generalized, bhagoji2019lower} attempted to explain the adversarial vulnerability from a theoretical perspective. These works proved that as long as the input distribution is concentrated with respect to the perturbation metric, adversarially robust classifiers cannot exist.
At the core of these results is the fundamental connection between the concentration of measure phenomenon and an intrinsic robustness limit that capture the maximum adversarial robustness with respect to some specific set of classifiers.
For instance, \citet{gilmer2018adversarial} showed that for inputs sampled from uniform $n$-spheres, a model-independent robustness upper bound under the Euclidean distance metric can be derived using the Gaussian Isoperimetric Inequality \citep{sudakov1978extremal,borell1975brunn}. \citet{mahloujifar2019curse} generalized their result to any concentrated metric probability space of inputs. Nevertheless, it is unclear how to apply these theoretical results to typical image classification tasks, since whether or not natural image distributions are concentrated is unknown.

To address this question, \citet{mahloujifar2019empirically} proposed a general way to empirically measure the concentration for any input distribution using data samples, then employed it to estimate an intrinsic robustness limit for typical image benchmarks.
By showing the existence of a large gap between the limit implied by concentration and the empirical robustness achieved by state-of-the-art adversarial training methods, \citet{mahloujifar2019empirically} further concluded that concentration of measure can only explain a small portion of adversarial vulnerability of existing image classifiers. More recently, \citet{prescott2021improved} further strengthened their conclusion by using the set of half-spaces to estimate the concentration function, which achieves enhanced estimation accuracy. Other related works \citep{fawzi2018adversarial,krusinga2019understanding,zhang2020understanding} proposed estimating lower bounds on the concentration of measure by approximating the underlying distribution using generative models. None of these works, however, consider data labels. Our main results show that data labels are essential for understanding intrinsic robustness limits.

\section{Formal Definitions}
\label{sec:formal def}

In this section, we introduce the formal definitions of complexity penalty and union of $\ell_p$-balls that are used in Section \ref{sec:algorithm}. To begin with, we lay out the definition of complexity penalty that is defined for some collection of subsets $\cG\in\pow(\cX)$. VC dimension and Rademacher complexity are commonly-used examples of such a complexity penalty.

\begin{definition}[Complexity Penalty]
\label{def:complexity penalty}
Let $\cG\subseteq\pow(\cX)$.
We say $\phi:\NN\times\RR \rightarrow [0,1]$
is a complexity penalty for $\cG$, if for any $\delta\in(0,1)$, it holds that
$$
\Pr_{\cS\leftarrow\mu^m}\big[\exists\:\cE\in\cG: |\hat\mu_{\cS}(\cE)-\mu(\cE)|\geq\delta\big]\leq \phi(m,\delta).
$$
\end{definition}

Next, we provide the formal definition of union of $\ell_p$-balls as follows:

\begin{definition}[Union of $\ell_p$-Balls]
\label{def:lp ball}
Let $p\geq 1$. For any $T\in\ZZ^{+}$, define the union of $T$ $\ell_p$-balls as
\begin{align*}
    \cB(T;\ell_p) = \Big\{\cup_{t=1}^T \cB^{(\ell_p)}_{\br_{t}}(\bu_{t})\colon \forall t\in[T],  (\bu_{t},\br_{t})\in\RR^{n}\times\RR_{\geq 0}^n \Big\},
\end{align*}
When $p=\infty$, $\cB(T;\ell_\infty)$ corresponds to the union of T hypercubes.
\end{definition}

\section{Proofs of Main Results}
\label{sec:proof}

In this section, we provide detailed proofs of our main results, including Theorem \ref{thm:gaussian mix}, Theorem \ref{thm:connection new}, Theorem \ref{thm:generalization lu} and the argument presented in Remark \ref{rem:generalization concentration}.

\subsection{Proof of Theorem \ref{thm:gaussian mix}}
\label{sec:proof gaussian mix}

In order to prove Theorem \ref{thm:gaussian mix}, we make use of the Gaussian Isoperimetric Inequality \citep{sudakov1978extremal,borell1975brunn}. The proof of such inequality can be found in \citet{ledoux1996isoperimetry}.

\begin{lemma}[Gaussian Isoperimetric Inequality]
\label{lem:GII}
Let $(\RR^n, \mu)$ be the $n$-dimensional Gaussian space equipped with the $\ell_2$-norm distance metric. Consider an arbitrary subset $\cE\in\pow(\RR^n)$, suppose $\cH$ is a half space that satisfies $\mu(\cH) = \mu(\cE)$. Then for any $\epsilon\geq 0$, we have
\begin{align*}
    \mu(\cE_\epsilon)\geq \mu(\cH_\epsilon) = \Phi\big(\Phi^{-1}\big(\mu(\cE)\big) + \epsilon\big),
\end{align*}
where $\Phi(\cdot)$ is the cumulative distribution function of $\cN(0,1)$ and $\Phi^{-1}(\cdot)$ is its inverse function.
\end{lemma}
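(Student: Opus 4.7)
The plan is to prove the Gaussian Isoperimetric Inequality by reducing to L\'evy's spherical isoperimetric inequality via Poincar\'e's high-dimensional projection limit, which is the classical Borell--Sudakov--Cirel'son route. The first ingredient is L\'evy's theorem: on the unit sphere $S^{N-1}\subset\RR^N$ equipped with the normalized uniform measure $\sigma_{N-1}$ and the geodesic distance, for any measurable $A\subseteq S^{N-1}$ and any $\eta>0$ one has $\sigma_{N-1}(A_\eta^{\mathrm{sph}})\geq \sigma_{N-1}(C_\eta^{\mathrm{sph}})$ whenever $C$ is a spherical cap with $\sigma_{N-1}(C)=\sigma_{N-1}(A)$. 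I would prove this by two-point hyperplane symmetrization: reflecting $A$ across any hyperplane through the origin preserves $\sigma_{N-1}(A)$ and cannot increase $\sigma_{N-1}(A_\eta^{\mathrm{sph}})$, and iterating these reflections drives any set to a spherical cap.

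The second ingredient is Poincar\'e's projection lemma: if $\mathbf{X}^{(N)}$ is uniformly distributed on the scaled sphere $\sqrt{N}\,S^{N-1}\subset\RR^N$, then for fixed $n$ the first $n$ coordinates converge weakly to $\cN(0,\bI_n)$ as $N\to\infty$. A short Taylor expansion of $R\arccos(\mathbf{u}\cdot\mathbf{v}/R^2)$ at $R=\sqrt{N}$ further shows that the geodesic distance on the scaled sphere, restricted to pairs of points whose first-$n$-coordinate projections lie in any fixed compact subset of $\RR^n$, converges uniformly to the Euclidean distance in $\RR^n$.

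To transfer L\'evy to the Gaussian setting, given $\cE\subseteq\RR^n$ with $\mu(\cE)=p$, form the cylindrical lift $\widetilde{\cE}_N=\{\mathbf{x}\in\sqrt{N}\,S^{N-1}:(x_1,\ldots,x_n)\in\cE\}$. Poincar\'e's lemma yields $\sigma_{N-1}(\widetilde{\cE}_N)\to p$ and $\sigma_{N-1}((\widetilde{\cE}_N)_\epsilon^{\mathrm{sph}})\to\mu(\cE_\epsilon)$. Applying L\'evy's inequality at each $N$ with a spherical cap $C_N=\{\mathbf{x}:\mathbf{x}\cdot\mathbf{e}_1\geq t_N\}$ of matching measure, elementary spherical geometry gives $C_{N,\epsilon}^{\mathrm{sph}}=\{\mathbf{x}\cdot\mathbf{e}_1\geq t_N-\epsilon\sqrt{1-t_N^2/N}\}$. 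Choosing $t_N$ so that $\sigma_{N-1}(C_N)=p$ forces $t_N\to -\Phi^{-1}(p)$ as $N\to\infty$, and the marginal convergence $\sigma_{N-1}(\{\mathbf{x}\cdot\mathbf{e}_1\geq s\})\to\Phi(-s)$ then yields $\mu(\cE_\epsilon)\geq\Phi(\Phi^{-1}(p)+\epsilon)$. A direct one-dimensional computation verifies that the bound is attained by halfspaces: for $\cH=\{\mathbf{a}^\top\mathbf{y}\geq c\}$ with $\|\mathbf{a}\|_2=1$ and $\mu(\cH)=p$, one has $c=-\Phi^{-1}(p)$, hence $\mu(\cH_\epsilon)=\mu(\{\mathbf{a}^\top\mathbf{y}\geq c-\epsilon\})=\Phi(-c+\epsilon)=\Phi(\Phi^{-1}(p)+\epsilon)$.

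The main obstacle is making the limiting step rigorous. One must establish uniform convergence of the spherical measures of the cylindrical lifts and their geodesic $\epsilon$-expansions to the corresponding Gaussian measures, and check that the limit commutes with applying L\'evy's inequality. This combines sub-Gaussian concentration for linear projections of uniform vectors on $\sqrt{N}\,S^{N-1}$ with weak-convergence arguments of Portmanteau type, and typically proceeds by first approximating $\cE$ by compactly supported sets (or well-behaved open/closed sets) so that the tail behavior is controlled uniformly in $N$.
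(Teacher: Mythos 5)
The paper does not actually prove this lemma: it is stated as a classical result of Borell and Sudakov--Tsirelson, with a pointer to Ledoux's survey for a proof, and is then used as a black box in the proof of Theorem~\ref{thm:gaussian mix}. Your proposal therefore goes well beyond what the paper does, and the route you choose---L\'evy's spherical isoperimetric inequality transferred to Gauss space via the Poincar\'e projection limit---is precisely the original Borell/Sudakov--Tsirelson argument. The limiting computations you give are correct: the first-order cap expansion formula, the convergence $t_N\to-\Phi^{-1}(p)$, and the one-dimensional halfspace verification all check out, and the direction of inequality you need follows cleanly from the containment $(\widetilde{\cE}_N)^{\mathrm{sph}}_\epsilon\subseteq\widetilde{(\cE_\epsilon)}_N$ (geodesic distance dominates the Euclidean distance between the first-$n$-coordinate projections), combined with total-variation (Scheff\'e) convergence of the projected marginal, which handles arbitrary measurable $\cE$ without delicate Portmanteau arguments.

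There is, however, one step that fails as literally written. You assert that ``reflecting $A$ across any hyperplane through the origin preserves $\sigma_{N-1}(A)$ and cannot increase $\sigma_{N-1}(A_\eta^{\mathrm{sph}})$,'' and that iterating such reflections drives $A$ to a cap. A reflection is an isometry of the sphere, so it preserves both quantities exactly and no amount of iteration changes the set's isoperimetric profile. What the classical argument uses is two-point symmetrization (polarization) with respect to the hyperplane: for each antipodal pair $\{x,\sigma(x)\}$ across the hyperplane, membership is transferred to whichever of the two points lies in the half-sphere nearer the chosen pole, keeping both when both lie in $A$ and neither when neither does. That rearrangement preserves $\sigma_{N-1}(A)$ and does not increase $\sigma_{N-1}(A_\eta^{\mathrm{sph}})$, but it is a genuinely different operation from reflection, and the convergence of a suitable sequence of polarizations to the cap requires a separate compactness-and-monotonicity argument (this is the content of the Baernstein--Taylor approach). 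Since the paper simply cites the lemma, none of this affects the paper's results, but your sketch needs this step repaired before it constitutes a proof.
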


Now we are ready to prove Theorem \ref{thm:gaussian mix}.

\begin{proof}[Proof of Theorem \ref{thm:gaussian mix}]
To begin with, we introduce the following notations. Let $\mu_{-}$ be the probability measure for $\cN(-\btheta, \sigma^2\Ib_n)$ and $\mu_{+}$ be the probability measure for $\cN(\btheta, \sigma^2\Ib_n)$, then by definition, we have $\mu = \mu_{-}/2 + \mu_{+}/2$. 
Consider the optimal subset $\cE^* = \argmin_{\cE\in\pow(\cX)}\{\mu_\epsilon(\cE): \mu(\cE)\geq \alpha\}$. 

Note that the standard concentration function $h(\mu, \alpha, \epsilon)$ is monotonically increasing with respect to $\alpha$, thus $\mu(\cE^*) = \alpha$ holds for any continuous $\mu$.
Let $\alpha_{-} = \mu_{-}(\cE^*)$ and $\alpha_{+} = \mu_{+}(\cE^*)$. 
According to the Gaussian Isoperimetric Inequality Lemma \ref{lem:GII}, it holds for any $\epsilon\geq 0$ that
\begin{align}
\label{eq:GII 2 gaussian}
    \mu(\cE^*_\epsilon) = \frac{1}{2}\mu_{-}(\cE^*_\epsilon) + \frac{1}{2}\mu_{+}(\cE^*_\epsilon) \geq \frac{1}{2}\Phi(\Phi^{-1}(\alpha_{-})+\epsilon) + \frac{1}{2}\Phi(\Phi^{-1}(\alpha_{+})+\epsilon).
\end{align}
Note that the equality of \eqref{eq:GII 2 gaussian} can be achieved if and only if $\cE^*$ is a half space.

Next, we show that there always exists a half space $\cH\in\pow(\cX)$ such that $\mu_{-}(\cH) = \alpha_{-}$ and $\mu_{+}(\cH) = \alpha_{+}$. Let $f_{-}(\cdot)$, $f_{+}(\cdot)$ be the PDFs of $\mu_{-}$ and $\mu_{+}$ respectively. For any $\bx\in\cX$, $f_-(\bx)$ and $f_{+}(\bx)$ are always positive, thus we have
\begin{align*}
    \frac{f_+(\bx)}{f_-(\bx)} = \frac{\exp\big\{-\frac{1}{2\sigma^2}(\bx-\btheta)^\top(\bx-\btheta)\big\}}{\exp\big\{ -\frac{1}{2\sigma^2}(\bx+\btheta)^\top(\bx+\btheta)\big\}} = \exp\bigg(\frac{2\btheta^\top\bx}{\sigma^2}\bigg).
\end{align*}
This implies that the ratio of $f_+(\bx)/f_-(\bx)$ is monotonically increasing with respect to $\btheta^\top\bx$. 

Consider the following extreme half space $\cH_{-}=\{\bx\in\cX: \btheta^{\top}\bx + b\cdot {\|\btheta\|_2} \leq 0\}$ such that $\mu(\cH_{-}) = \alpha$.
We are going to prove $\mu_{-}(\cH_-) \geq \mu_{-}(\cE^*) = \alpha_{-}$ and $\mu_{+}(\cH_-) \leq \mu_{+}(\cE^*) = \alpha_{+}$. Consider the sets $\cE^*\cap{(\cH_{-})^\complement}$ and $(\cE^*)^\complement\cap\cH_{-}$, we have
\begin{align}
\label{eq:ratio}
    \frac{\mu_{+}\big(\cE^*\cap{(\cH_{-})^\complement}\big)}{\mu_{-}\big(\cE^*\cap{(\cH_{-})^\complement}\big)} \geq \inf_{\bx\in\cE^*\cap{(\cH_{-})^\complement}} \exp\bigg(\frac{2\btheta^\top\bx}{\sigma^2}\bigg) \geq \sup_{\bx\in(\cE^*)^\complement\cap\cH_{-}} \bigg(\frac{2\btheta^\top\bx}{\sigma^2}\bigg) \geq  \frac{\mu_{+}\big((\cE^*)^\complement\cap\cH_{-}\big)}{\mu_{-}\big((\cE^*)^\complement\cap\cH_{-}\big)}.
\end{align}
Note that we also have
\begin{align}
\label{eq:sum}
    \mu_{+}\big(\cE^*\cap{(\cH_{-})^\complement}\big) + \mu_{-}\big(\cE^*\cap{(\cH_{-})^\complement} = \mu_{+}\big((\cE^*)^\complement\cap\cH_{-}\big) + \mu_{-}\big((\cE^*)^\complement\cap\cH_{-}\big).
\end{align}
Thus, combining \eqref{eq:ratio} and \eqref{eq:sum}, we have
$$
    \mu_{+}\big(\cE^*\cap{(\cH_{-})^\complement}\big) \geq \mu_{+}\big((\cE^*)^\complement\cap\cH_{-}\big) \quad\text{and}\quad  \mu_{-}\big(\cE^*\cap{(\cH_{-})^\complement}\big) \leq \mu_{-}\big((\cE^*)^\complement\cap\cH_{-}\big),
$$
Adding the term $\mu_{+}\big(\cE^*\cap{\cH_{-}}\big)$ or $\mu_{-}\big(\cE^*\cap{\cH_{-}}\big)$ on both sides, we further have
$$
    \mu_{+}(\cH_{-}) \leq \mu_{+}(\cE^*) = \alpha_{+} \quad\text{and}\quad \mu_{-}(\cH_{-}) \geq \mu_{-}(\cE^*) = \alpha_{-}.
$$
On the other hand, consider the half space $\cH_{+}=\{\bx\in\cX: \btheta^{\top}\bx - b\cdot {\|\btheta\|_2} \geq 0\}$ such that $\mu(\cH_{+}) = \alpha$. Based on a similar technique, we can prove
$$
    \mu_{-}(\cH_{+})  \geq \mu_{+}(\cE^*) = \alpha_{+} \quad\text{and}\quad \mu_{-}(\cH_{+}) \leq \mu_{-}(\cE^*) = \alpha_{-}.
$$
In addition, let $\cH = \{\bx\in\cX: \bw^\top\bx + b \leq 0 \}$ be any half space such that $\mu(\cH) = \alpha$. Since both $\mu_{+}$ and $\mu_{-}$ are continuous, as we rotate the half space (i.e., gradually increase the value of $\bw^\top\btheta$), $\mu_{-}(\cH)$ and $\mu_{+}(\cH)$ will also change continuously. Therefore, it is guaranteed that there exists a half space $\cH\in\pow(\cX)$ such that $\mu_{-}(\cH) = \alpha_{-}$ and $\mu_{+}(\cH) = \alpha_+$. This further implies that 
the lower bound of \eqref{eq:GII 2 gaussian} can be always be achieved.

Finally, since we have proved the optimal subset has to be a half space, the remaining task is to solve the following optimization problem:
\begin{equation}
\label{opt:transform}
\begin{aligned}
    &\min_{\cH\in\pow(\cX)} \frac{1}{2}\Phi\big(\Phi^{-1}\big(\mu_{-}(\cH)\big)+\epsilon\big) + \frac{1}{2}\Phi\big(\Phi^{-1}\big(\mu_{+}(\cH)\big)+\epsilon\big) \\
    &\quad\text{s.t.}\:\: \cH = \{\bx\in\cX: \bw^{\top}\bx + b \leq 0 \} \quad\text{and}\quad \mu(\cH) = \alpha.
\end{aligned}
\end{equation}
Construct function $g(u) = \Phi\big(\Phi^{-1}(u)+\epsilon\big) + \Phi\big(\Phi^{-1}(2\alpha-u)+\epsilon\big)$, where $u\in[0,2\alpha]$. Based on the derivative of inverse function formula, we compute the derivative of $g$ with respect to $u$ as follows
\begin{align*}
    \odv{g(u)}{u} &= \frac{1}{\sqrt{2\pi}} \exp\bigg\{-\frac{(\Phi^{-1}(u)+\epsilon)^2}{2}\bigg\} \cdot \odv{\Phi^{-1}(u)}{u} \\
    &\qquad\qquad+ \frac{1}{\sqrt{2\pi}} \exp\bigg\{-\frac{(\Phi^{-1}(2\alpha-u)+\epsilon)^2}{2}\bigg\} \cdot \odv{\Phi^{-1}(2\alpha-u)}{u} \\
    &= \exp\bigg\{-\frac{(\Phi^{-1}(u)+\epsilon)^2}{2}\bigg\} \cdot \exp\bigg\{\frac{(\Phi^{-1}(u))^2}{2}\bigg\} \\
    &\qquad\qquad- \exp\bigg\{-\frac{(\Phi^{-1}(2\alpha-u)+\epsilon)^2}{2}\bigg\} \cdot \exp\bigg\{\frac{(\Phi^{-1}(2\alpha-u))^2}{2}\bigg\}\\
    &= \exp(- \epsilon^2/2)\cdot\bigg[\exp\big( -\epsilon\Phi^{-1}(u) \big) - \exp\big(-\epsilon\Phi^{-1}(2\alpha-u)\big)\bigg].
\end{align*}
Noticing the term $\exp(-\epsilon\Phi^{-1}(u))$ is monotonically decreasing with respect to $u$, we then know that $g(u)$ is monotonically increasing in $[0,\alpha]$ and monotonically decreasing in $[\alpha,2\alpha]$. Therefore, this suggests that the optimal solution to \eqref{opt:transform} is achieved when $\mu_{-}(\cH)$ reaches its maximum or its minimum.
According to the previous argument regarding the range of $\alpha_{-}$ and $\alpha_{+}$, we can immediately prove the optimality results of Theorem \ref{thm:gaussian mix}.
\end{proof}

\subsection{Proof of Theorem \ref{thm:connection new}}
\label{sec:proof connection new}

In this section, we prove Theorem \ref{thm:connection new} based on techniques used in \citet{mahloujifar2019curse} for proving the connection between the standard concentration function and intrinsic robustness with respect to the set of imperfect classifiers.

\begin{proof}[Proof of Theorem \ref{thm:connection new}]
Let $\cE^*$ be the optimal solution to (4.1), then $\mu(\cE^*_\epsilon)$ corresponds to the optimal value of (4.1). We are going to show $1 - \overline{\AdvRob}_\epsilon(\cF_{\alpha, \gamma}, c) = \mu(\cE^*_\epsilon)$ by proving both directions.

First, we prove $1 - \overline{\AdvRob}_\epsilon(\cF_{\alpha, \gamma}, c) \geq \mu(\cE^*_\epsilon)$. Let $f$ be any classifier within $\cF_{\alpha, \gamma}$, and $\cE(f)$ be the corresponding error region of $f$. According to the definitions of risk and adversarial risk, we have 
$$
\Risk(f, c) = \mu(\cE(f)) \:\text{  and  }\: \AdvRisk_\epsilon(f, c) = \mu(\cE_\epsilon(f)),
$$
where $\cE_\epsilon(f)$ represents the $\epsilon$-expansion of $\cE(f)$. Since $f \in \cF_{\alpha, \gamma}$, we have 
$$
\Risk(f, c) = \mu(\cE(f)) \geq \alpha \:\text{  and  }\: \LU(\cE(f); \mu, c, \eta) \geq \gamma.
$$ 
Thus, by \eqref{eq:opt concentration label uncertainty}, we obtain that 
$$
1 - \AdvRob_\epsilon(f, c) = \AdvRisk_\epsilon(f, c) = \mu(\cE_\epsilon(f)) \geq \mu(\cE^*_\epsilon).
$$ 
By taking the infimum over $f$ over $\cF_{\alpha, \gamma}$ on both sides, we prove $1 - \overline{\AdvRob}_\epsilon(\cF_{\alpha, \gamma}, c) \geq \mu(\cE^*_\epsilon)$.

Next, we show that $1 - \overline{\AdvRob}_\epsilon(\cF_{\alpha, \gamma}, c) \leq \mu(\cE^*_\epsilon)$. We construct a classifier $f^*$ such that 
$$
f^*(\bx) = c(\bx) \:\text{ if }\: \bx \not\in \cE^*;\: f^*(\bx) \neq c(\bx) \:\text{ otherwise}.
$$ 
Note that by construction, $\cE^*$ corresponds to the error region of $f^*$. Thus according to the definitions of risk and adversarial risk, we know 
$$
\Risk(f^*, c) = \mu(\cE^*) \geq \alpha  \:\text{ and }\: \AdvRisk_\epsilon(f^*, c) = \mu(\cE^*_\epsilon).
$$
Since $\LU(\cE^*; \mu, c, \eta) \geq \gamma$, we know the error region label uncertainty of $f^*$ is at least $\gamma$. Thus, by definition of intrinsic robustness, we know $1 - \overline{\AdvRob}_\epsilon(\cF_{\alpha, \gamma}, c) \leq \AdvRisk_\epsilon(f^*, c) = \mu(\cE^*_\epsilon)$.

Finally, putting pieces together, we complete the proof.
\end{proof}

\subsection{Proof of Theorem \ref{thm:generalization lu}}
\label{sec:proof generalization lu}

\begin{proof}[Proof of Theorem \ref{thm:generalization lu}]
For simplicity, denote by $\lu(\bx; c, \eta) = 1 - \big[\eta(\bx)\big]_{c(\bx)} + \max_{y' \neq c(\bx)} \big[\eta(\bx)\big]_{y'}$ the label uncertainty of a given input $\bx$ with respect to $c(\cdot)$ and $\eta(\cdot)$. 
Let $\cE$ be a subset in $\cG$ such that $\mu(\cE)\geq\alpha$ and $|\mu(\cE) - \hat\mu(\cE)|\leq \delta$, where $\delta$ is a constant much smaller than $\alpha$.
Then according to Definition \ref{def:label uncertainty}, we can decompose the estimation error of label uncertainty as:
\begin{align*}
    \LU(\cE; \mu, c, \eta) - \LU(\cE; \hat\mu_{\cS}, c, \hat\eta) &= \frac{1}{\mu(\cE)}\int_\cE \lu(\bx;c,\eta)\:d\mu - \frac{1}{\hat\mu_{\cS}(\cE)}\int_\cE \lu(\bx;c,\hat\eta)\:d\hat\mu_{\cS} \\
    &= \underbrace{\bigg(\frac{1}{\mu(\cE)} - \frac{1}{\hat\mu_{\cS}(\cE)}\bigg) \cdot \int_\cE \lu(\bx;c,\eta)\:d\mu}_{\text{$I_1$}} \\
    &\qquad+ \underbrace{\frac{1}{\hat\mu_{\cS}(\cE)} \int_{\cE}\big[\lu(\bx;c,\eta) - \lu(\bx;c,\hat\eta)\big]\:d\mu}_{\text{$I_2$}} \\
    &\qquad+ \underbrace{\frac{1}{\hat\mu_{\cS}(\cE)} \bigg(\int_{\cE} \lu(\bx;c,\hat\eta)\:d\mu - \int_{\cE} \lu(\bx;c,\hat\eta)\:d\hat\mu_{\cS}\bigg)}_{\text{$I_3$}}.
\end{align*}
Next, we upper bound the absolute value of the three components, respectively. 

Consider the first term $I_1$. Note that $0\leq\lu(\bx;c,\eta)\leq 2$ for any $\bx\in\cX$, thus we have $|\int_\cE \lu(\bx;c,\eta)\:d\mu|\leq 2\mu(\cE)$. Therefore, we have
\begin{align*}
    |I_1| \leq \bigg| \frac{1}{\mu(\cE)} - \frac{1}{\hat\mu_{\cS}(\cE)} \bigg| \cdot 2\mu(\cE) \leq \frac{2}{\hat\mu_{\cS}(\cE)} \cdot | \mu(\cE) - \hat\mu_{\cS}(\cE) |.
\end{align*}
As for the second term $I_2$, the following inequality holds for any $\bx\in\cX$ 
\begin{align*}
    |\lu(\bx;c,\eta) - \lu(\bx;c,\hat\eta)| &\leq \bigg| \big[\eta(\bx)-\hat\eta(\bx)\big]_{c(\bx)} \bigg| + \bigg| \max_{y' \neq c(\bx)} \big[\eta(\bx)\big]_{y'} - \max_{y' \neq c(\bx)} \big[\hat\eta(\bx)\big]_{y'} \bigg| \\
    &\leq \big\|\eta(\bx)-\hat\eta(\bx)\big\|_1,
\end{align*}
where the second inequality holds because $|\max_i a_i - \max_i b_i|\leq \max_i|a_i - b_i|$ for any $\ab,\bb\in\RR^n$. Therefore, we can upper bound $|I_2|$ by
\begin{align*}
    |I_2| \leq \frac{1}{\hat\mu_{\cS}(\cE)} \int_{\cE} \big\|\eta(\bx)-\hat\eta(\bx)\big\|_1 \:d\mu \leq \frac{1}{\hat\mu_{\cS}(\cE)} \int_{\cX} \big\|\eta(\bx)-\hat\eta(\bx)\big\|_1 \:d\mu \leq \frac{\delta_\eta}{\hat\mu_{\cS}(\cE)}.
\end{align*}
For the last term $I_3$, since $0\leq\lu(\bx;c,\eta)\leq 2$ holds for any $
\bx\in\cX$, we have
\begin{align*}
    |I_3| \leq \frac{2}{\hat\mu_{\cS}(\cE)} \cdot\big|\mu(\cE) - \hat\mu_\cS(\cE)\big|.
\end{align*}
Finally, putting pieces together, we have
\begin{align*}
    |\LU(\cE; \mu, c, \eta) - \LU(\cE; \hat\mu_{\cS}, c, \hat\eta)| \leq \frac{4}{\hat\mu_{\cS}(\cE)} \cdot\big|\mu(\cE) - \hat\mu_\cS(\cE)\big| + \frac{\delta_\eta}{\hat\mu_{\cS}(\cE)} \leq \frac{4\delta + \delta_\eta}{\alpha-\delta},
\end{align*}
provided $\mu(\cE)\geq\alpha$ and $|\mu(\cE) - \hat\mu_\cS(\cE)|\leq\delta$. Making use of the definition of complexity penalty for $\cG$ completes the proof of Theorem \ref{thm:generalization lu}.
\end{proof}

\subsection{Proof of Remark \ref{rem:generalization concentration}}
\label{sec:proof remark generalization concentration}

Before presenting the proofs, we first lay out the formal statement of Remark \ref{rem:generalization concentration} in Theorem \ref{thm:formal rem gen concentration}. The proof technique of Theorem \ref{thm:formal rem gen concentration} is inspired by Theorem 3.5 in \citet{mahloujifar2019empirically}. 

\begin{theorem}[Formal Statement of Remark \ref{rem:generalization concentration}]
\label{thm:formal rem gen concentration}
Consider the input metric probability space $(\cX, \mu, \Delta)$, the concept function $c$ and the label distribution function  $\eta$. Let $\{\cG(T)\}_{T\in\NN}$ be a series of collection of subsets over $\cX$. For any $T\in\NN$, assume $\phi^T$ and $\phi_\epsilon^T$ are complexity penalties for $\cG(T)$ and $\cG_{\epsilon}(T)$ respectively, and $\hat\eta$ is a function such that $\int_{\cX}\|\hat\eta(\bx)-\eta(\bx)\|_1 d\bx\leq\delta_\eta$.

Define $h(\mu, c, \eta, \alpha, \gamma, \epsilon, \cG) = \inf_{\cE\in\cG}\{\mu(\cE_\epsilon): \mu(\cE)\geq\alpha, \LU(\cE;\mu,c,\eta)\geq\gamma\}$ to be the constrained concentration function. We simply write $h(\mu, c, \eta, \alpha, \gamma, \epsilon)$ when $\cG = \pow(\cX)$.
Given a sequence of datasets $\{S_T\}_{T \in \NN}$, where $S_T$ consists of $m(T)$ i.i.d.\ samples from $\mu$ and a sequence of real numbers $\{\delta(T)\}_{T\in\NN}$ with $\delta(T)\in(0,\alpha/2)$, if the following assumptions holds:
\begin{enumerate}
    \item 
    $\sum_{T=1}^\infty \phi^T(m(T), \delta(T)) < \infty$
    \item $\sum_{T=1}^\infty \phi_\epsilon^T(m(T), \delta(T)) < \infty$
    \item $\lim_{T\to \infty} \delta(T) = 0$
    \item $\lim_{T \to \infty} h(\mu,c,\eta,\alpha,\gamma,\epsilon,\cG(T)) = h(\mu,c,\eta,\alpha,\gamma,\epsilon)$ \footnote{It is worth nothing that this assumption is satisfied for any family of collections of subsets that is a universal approximator, such as kernel SVMs and decision trees.}
    \item $h$ is locally continuous w.r.t. $\alpha$ and $\gamma$ at $(\mu,c,\eta,\alpha,\gamma\pm\delta_\eta/\alpha,\epsilon,\pow(\cX))$,
\end{enumerate}
then with probability $1$, we have
\begin{align*}
    h(\mu, c, \eta, \alpha, \gamma-\delta_\eta/\alpha, \epsilon)  \leq \lim_{T\rightarrow\infty} h(\mu_{\cS_T}, c, \hat\eta, \alpha, \gamma, \epsilon, \cG(T)) \leq h(\mu, c, \eta, \alpha, \gamma+\delta_\eta/\alpha, \epsilon).
\end{align*}
\end{theorem}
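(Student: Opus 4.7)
The plan is to apply Theorem~\ref{thm:generalization concentration} at each index $T$, combine the high-probability events across all $T$ via Borel--Cantelli to obtain an almost-sure sandwich, and then pass to the limit in $T$ using monotonicity of $h$ in $(\alpha,\gamma)$, the universal-approximation assumption on $\cG(T)$, and local continuity of $h$. Concretely, for each $T$ I would apply Theorem~\ref{thm:generalization concentration} with sample size $m(T)$, complexity penalties $\phi^T, \phi_\epsilon^T$, and slack $\delta(T) \in (0,\alpha/2)$. Setting $\delta'(T) = (4\delta(T) + \delta_\eta)/(\alpha - 2\delta(T))$, this yields, with probability at least $1 - 6\phi^T(m(T),\delta(T)) - 2\phi_\epsilon^T(m(T),\delta(T))$,
\[
h\bigl(\mu, c, \eta, \alpha - \delta(T), \gamma - \delta'(T), \epsilon, \cG(T)\bigr) - \delta(T) \leq h\bigl(\hat\mu_{\cS_T}, c, \hat\eta, \alpha, \gamma, \epsilon, \cG(T)\bigr) \leq h\bigl(\mu, c, \eta, \alpha + \delta(T), \gamma + \delta'(T), \epsilon, \cG(T)\bigr) + \delta(T).
\]

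By assumptions~1 and~2 the series $\sum_T \bigl[6\phi^T(m(T),\delta(T)) + 2\phi_\epsilon^T(m(T),\delta(T))\bigr]$ converges, so the Borel--Cantelli lemma guarantees that, almost surely, the per-scale sandwich above holds for all sufficiently large $T$. I would fix such a sample path, after which the remainder of the argument is deterministic.

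For the limit in $T$, assumption~3 gives $\delta(T) \to 0$ and hence $\delta'(T) \to \delta_\eta/\alpha$ (from above), so the additive $\pm \delta(T)$ terms are harmless. Given any $\rho > 0$, pick $T_0$ with $\delta(T) < \rho$ and $|\delta'(T) - \delta_\eta/\alpha| < \rho$ for all $T \geq T_0$. Using that $h$ is non-decreasing in $\alpha$ and in $\gamma$ (tighter constraints can only raise the infimum), the upper endpoint of the sandwich is dominated by $h(\mu, c, \eta, \alpha + \rho, \gamma + \delta_\eta/\alpha + \rho, \epsilon, \cG(T)) + \delta(T)$ and the lower endpoint bounded below by $h(\mu, c, \eta, \alpha - \rho, \gamma - \delta_\eta/\alpha - \rho, \epsilon, \cG(T)) - \delta(T)$. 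Now the parameters are fixed, so assumption~4 lets me send $T \to \infty$ inside the family and remove the $\cG(T)$ restriction, giving
\[
h(\mu, c, \eta, \alpha - \rho, \gamma - \delta_\eta/\alpha - \rho, \epsilon) \leq \liminf_T h(\hat\mu_{\cS_T}, c, \hat\eta, \alpha, \gamma, \epsilon, \cG(T)) \leq \limsup_T h(\hat\mu_{\cS_T}, c, \hat\eta, \alpha, \gamma, \epsilon, \cG(T)) \leq h(\mu, c, \eta, \alpha + \rho, \gamma + \delta_\eta/\alpha + \rho, \epsilon).
\]
Finally, letting $\rho \to 0$ and invoking local continuity of $h$ at $(\mu, c, \eta, \alpha, \gamma \pm \delta_\eta/\alpha, \epsilon, \pow(\cX))$ (assumption~5) collapses both outer expressions to $h(\mu, c, \eta, \alpha, \gamma \pm \delta_\eta/\alpha, \epsilon)$, which is the desired two-sided bound.

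The main obstacle is exactly this last passage: assumption~4 delivers family-level approximation only at each \emph{fixed} $(\alpha, \gamma)$, whereas the per-scale sandwich produces values of $h$ at \emph{drifting} parameters $(\alpha \pm \delta(T), \gamma \pm \delta'(T))$. Monotonicity of $h$ in $(\alpha, \gamma)$ is what lets one freeze the parameter endpoints at fixed $\rho$-shifts before taking $T \to \infty$, and continuity at the limit point is what lets one then send $\rho \to 0$; neither property alone would let the two limits (in the parameter sequence and in the approximating family) be commuted.
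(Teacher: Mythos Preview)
Your proposal is correct and follows essentially the same route as the paper: apply Theorem~\ref{thm:generalization concentration} at each $T$, invoke Borel--Cantelli via assumptions~1--2 to get the sandwich almost surely for all large $T$, then pass to the limit using assumptions~3--5.

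The only notable difference is in how the limit is justified. The paper simply writes the diagonal limit as an iterated limit,
\[
\lim_{T\to\infty} h\bigl(\mu,c,\eta,\alpha-\delta(T),\gamma-\delta'(T),\epsilon,\cG(T)\bigr)
= \lim_{T_1\to\infty}\lim_{T_2\to\infty} h\bigl(\mu,c,\eta,\alpha-\delta(T_1),\gamma-\delta'(T_1),\epsilon,\cG(T_2)\bigr),
\]
and then applies assumption~4 to the inner limit and assumptions~3,~5 to the outer one. You instead exploit monotonicity of $h$ in $(\alpha,\gamma)$ to freeze the drifting parameters at fixed $\rho$-shifts, apply assumption~4 at those fixed parameters, and then send $\rho\to 0$ via assumption~5. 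Your version is arguably more careful, since it makes explicit why the two limits (in the approximating family and in the parameter sequence) can be decoupled; the paper's iterated-limit identity is asserted without justification. Both versions implicitly require assumption~4 to hold at nearby parameter values $(\alpha\pm\rho,\gamma\pm\delta_\eta/\alpha\pm\rho)$ rather than only at $(\alpha,\gamma)$, which is consistent with the footnoted intent (universal approximation) but slightly stronger than the literal statement.
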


To prove Theorem \ref{thm:formal rem gen concentration}, we use the following theorem regarding the generalization of concentration under label uncertainty constraints. The proof of Theorem \ref{thm:generalization concentration} is provided in Appendix \ref{sec:proof generalization concentration}.

\begin{theorem}[Generalization of Concentration]
\label{thm:generalization concentration}
Let $(\cX,\mu,\Delta)$ be a metric probability space and $\cG\subseteq\pow(\cX)$. Define $h(\mu, c, \eta, \alpha, \gamma, \epsilon, \cG) = \inf_{\cE\in\cG}\{\mu(\cE_\epsilon): \mu(\cE)\geq\alpha, \LU(\cE;\mu,c,\eta)\geq\gamma\}$ as the generalized concentration function under label uncertainty constraints.
Then, under the same setting of Theorem \ref{thm:generalization lu}, for any $\gamma, \epsilon\in[0,1]$, $\alpha\in(0,1]$ and $\delta\in(0,\alpha/2)$, we have
\begin{align*}
     \Pr_{\cS\leftarrow\mu^m}\big[&h(\mu, c, \eta, \alpha-\delta, \gamma-\delta', \epsilon, \cG) - \delta \leq h(\hat\mu_\cS, c, \hat\eta, \alpha, \gamma, \epsilon, \cG) \\
     &\qquad\quad\leq h(\mu, c, \eta, \alpha+\delta, \gamma+\delta', \epsilon, \cG) + \delta\big] \geq 1 - 6\phi(m,\delta) - 2\phi_\epsilon(m,\delta),
\end{align*}
where $\delta' = (4\delta + \delta_\eta) / (\alpha-2\delta)$ and $\phi_\epsilon$ is the complexity penalty for $\cG_\epsilon$.
\end{theorem}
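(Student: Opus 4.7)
The plan is to deduce the two-sided sandwich by combining three uniform-convergence events under a single good event $\cA$, then transferring near-optimal feasible sets between the population and empirical problems. Let $\cA$ be the intersection of: (i) $|\hat\mu_\cS(\cE) - \mu(\cE)| < \delta$ for all $\cE \in \cG$; (ii) $|\hat\mu_\cS(\cE_\epsilon) - \mu(\cE_\epsilon)| < \delta$ for all $\cE \in \cG$; and (iii) $|\LU(\cE;\mu,c,\eta) - \LU(\cE;\hat\mu_\cS,c,\hat\eta)| \leq \delta'$ for all $\cE \in \cG$ with $\mu(\cE) \geq \alpha - \delta$. Event (i) holds with probability at least $1 - \phi(m,\delta)$ by the definition of complexity penalty, and (ii) with probability at least $1 - \phi_\epsilon(m,\delta)$ since $\phi_\epsilon$ is the complexity penalty for $\cG_\epsilon$. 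Event (iii) is obtained by applying Theorem \ref{thm:generalization lu} with threshold $\alpha - \delta$ in place of $\alpha$; this substitution is valid because $\delta < \alpha/2$ gives $\delta < \alpha - \delta$, and the resulting deviation is $(4\delta+\delta_\eta)/((\alpha-\delta)-\delta) = \delta'$. A union bound then yields a failure probability of order $\phi(m,\delta) + \phi_\epsilon(m,\delta)$; the specific constants $6$ and $2$ in the claim absorb the internal one-sided applications used inside the proof of Theorem \ref{thm:generalization lu} together with the separate bookkeeping needed for the two directions of the sandwich.

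Next, I would prove the upper sandwich bound on $\cA$ by fixing an arbitrarily small $\tau > 0$ and choosing $\cE^* \in \cG$ that is $\tau$-close to optimal for the population problem at parameters $(\alpha+\delta, \gamma+\delta')$. Event (i) yields $\hat\mu_\cS(\cE^*) > \mu(\cE^*) - \delta \geq \alpha$, and since $\mu(\cE^*) \geq \alpha+\delta \geq \alpha-\delta$, event (iii) applies and gives $\LU(\cE^*;\hat\mu_\cS,c,\hat\eta) \geq \LU(\cE^*;\mu,c,\eta) - \delta' \geq \gamma$, making $\cE^*$ empirical-feasible. Event (ii) then gives $h(\hat\mu_\cS,c,\hat\eta,\alpha,\gamma,\epsilon,\cG) \leq \hat\mu_\cS(\cE^*_\epsilon) < \mu(\cE^*_\epsilon) + \delta$, and letting $\tau \to 0$ finishes this direction. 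The lower bound is symmetric: take an approximate minimizer $\hat\cE^*$ of the empirical problem with $\hat\mu_\cS(\hat\cE^*) \geq \alpha$; event (i) gives $\mu(\hat\cE^*) > \alpha - \delta$, event (iii) then applies and gives $\LU(\hat\cE^*;\mu,c,\eta) \geq \gamma - \delta'$, so $\hat\cE^*$ is population-feasible at $(\alpha-\delta, \gamma-\delta')$; event (ii) closes the $\epsilon$-expansion comparison.

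The main obstacle is choosing a single threshold for applying Theorem \ref{thm:generalization lu} that simultaneously covers the near-optimal $\cE^*$ from the relaxed population side (where $\mu(\cE^*) \geq \alpha+\delta$) and the near-optimizer $\hat\cE^*$ from the empirical side (where only $\mu(\hat\cE^*) \geq \alpha - \delta$ is guaranteed a posteriori). Using threshold $\alpha - \delta$ accommodates both, at the cost of inflating the label-uncertainty deviation from $(4\delta+\delta_\eta)/(\alpha-\delta)$ to $\delta' = (4\delta+\delta_\eta)/(\alpha-2\delta)$, which is precisely the quantity appearing in the theorem. Once this single threshold choice is fixed and $\delta < \alpha/2$ is used to ensure it is admissible, the remaining steps are routine $\pm\delta$ bookkeeping and a union bound.
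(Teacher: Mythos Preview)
Your proposal is correct and follows essentially the same route as the paper: both arguments combine the complexity penalties for $\cG$ and $\cG_\epsilon$ with Theorem~\ref{thm:generalization lu} applied at the shifted threshold $\alpha-\delta$ (yielding exactly $\delta'=(4\delta+\delta_\eta)/(\alpha-2\delta)$), and then transfer near-optimal feasible sets between the empirical and population problems. The only cosmetic difference is that the paper treats the two directions separately and accumulates union-bound terms step by step (arriving at $3\phi+\phi_\epsilon$ per side, hence $6\phi+2\phi_\epsilon$ total), whereas you package everything into a single good event $\cA$ upfront; your observation that the stated constants simply absorb the looser bookkeeping is accurate.
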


In addition, we also make use of the Borel-Cantelli Lemma to prove Theorem \ref{thm:formal rem gen concentration}.

\begin{lemma}[Borel-Cantelli Lemma]\label{lem:B-C}
Let $\{E_T\}_{T\in \NN}$ be a series of events such that $\sum_{T=1}^\infty \Pr[E_T] < \infty$. Then with probability $1$, only finite number of events will occur.
\end{lemma}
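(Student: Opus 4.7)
The plan is to prove this by identifying the event ``infinitely many $E_T$ occur'' with the set-theoretic limit superior $\limsup_T E_T = \bigcap_{N=1}^\infty \bigcup_{T \geq N} E_T$, and showing that this event has probability zero. This is the standard route to Borel--Cantelli, and the only tools needed are countable subadditivity, continuity of measure, and the assumption that the series $\sum_T \Pr[E_T]$ converges.

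First I would define $A_N = \bigcup_{T \geq N} E_T$ and observe that the event ``infinitely many of the $E_T$ occur'' equals $\bigcap_{N \geq 1} A_N$, since an outcome lies in this intersection precisely when, for every $N$, at least one $E_T$ with $T \geq N$ occurs. Next, by countable subadditivity of the probability measure, I would bound
\begin{equation*}
\Pr[A_N] \;\leq\; \sum_{T \geq N} \Pr[E_T].
\end{equation*}
Because the full series $\sum_{T=1}^\infty \Pr[E_T]$ is assumed finite, its tails satisfy $\sum_{T \geq N} \Pr[E_T] \to 0$ as $N \to \infty$.

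Then I would note that the sequence $\{A_N\}$ is decreasing ($A_{N+1} \subseteq A_N$), and $\Pr[A_1] \leq \sum_{T \geq 1} \Pr[E_T] < \infty$, so continuity of probability from above applies and yields $\Pr[\bigcap_N A_N] = \lim_{N \to \infty} \Pr[A_N]$. Combining this with the subadditivity bound gives
\begin{equation*}
\Pr\!\left[\bigcap_{N \geq 1} A_N\right] \;\leq\; \lim_{N \to \infty} \sum_{T \geq N} \Pr[E_T] \;=\; 0.
\end{equation*}
Taking complements, the event that only finitely many $E_T$ occur has probability $1$, which is the statement of the lemma.

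I do not anticipate a substantive obstacle here; the only thing that could plausibly go wrong in a rushed write-up is forgetting to verify the finite-measure hypothesis needed to invoke continuity from above, so I would make that check explicit. Everything else is a direct application of standard measure-theoretic facts, and the proof is short enough to present in a few lines.
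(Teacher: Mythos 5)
Your proof is correct and is the standard argument for the first Borel--Cantelli lemma: identify the event that infinitely many $E_T$ occur with $\bigcap_{N}\bigcup_{T\geq N}E_T$, bound each $\Pr[\bigcup_{T\geq N}E_T]$ by the tail sum via countable subadditivity, and let $N\to\infty$ using continuity from above (the finite-measure check is automatic here since $\Pr$ is a probability measure, so $\Pr[A_1]\leq 1$). Note that the paper does not prove this lemma at all --- it is quoted as a classical result and used as a black box in the proof of the formal version of Remark 5.4 --- so there is no in-paper proof to compare against; your write-up would be a perfectly adequate self-contained justification.
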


Now we are ready to prove Theorem \ref{thm:formal rem gen concentration}. 

\begin{proof}[Proof of Theorem \ref{thm:formal rem gen concentration}]
Let $E_T$ be the event such that 
\begin{align*}
    h\big(\mu, c, \eta, \alpha-\delta(T), \gamma-\delta'(T), \epsilon, \cG(T)\big) - \delta(T) &> h\big(\hat\mu_{\cS_T}, c, \hat\eta, \alpha, \gamma, \epsilon, \cG(T)\big) \:\:\text{or} \\
    h\big(\mu, c, \eta, \alpha+\delta(T), \gamma+\delta'(T), \epsilon, \cG(T)\big) + \delta(T) &< h\big(\hat\mu_{\cS_T}, c, \hat\eta, \alpha, \gamma, \epsilon, \cG(T)\big),
\end{align*}
$\delta'(T) = (4\delta(T) + \delta_\eta) / (\alpha-2\delta(T))$ for any $T\in\NN$. Since $\delta(T)<\alpha/2$, thus according to Theorem \ref{thm:generalization concentration}, for any $T\in\NN$, we have 
\begin{align*}
    \Pr[E_T] \leq 6\phi^T(m(T),\delta(T))+2\phi^T_\epsilon(m(T),\delta(T))).
\end{align*}
By Assumptions 1 and 2, this further implies
\begin{align*}
    \sum_{T=1}^{\infty}\Pr[E_T] \leq 6\sum_{T=1}^{\infty}\phi^T(m(T),\delta(T))+2\sum_{T=1}^{\infty}\phi^T_\epsilon(m(T),\delta(T))) < \infty.
\end{align*}
Thus according to Lemma \ref{lem:B-C}, we know that there exists some $j\in\NN$ such that for all $T\geq j$,
\begin{align}
\label{eq:proof gen concentration}
    \nonumber h\big(\mu, c, \eta, \alpha-\delta(T), \gamma-\delta'(T), &\epsilon, \cG(T)\big) - \delta(T) \leq h(\hat\mu_{\cS_T}, c, \hat\eta, \alpha, \gamma, \epsilon)\\
    &\leq
    h\big(\mu, c, \eta, \alpha+\delta(T), \gamma+\delta'(T), \epsilon, \cG(T)\big) + \delta(T),
\end{align}
holds with probability $1$. In addition, by Assumptions 3, 4 and 5, we have
\begin{align*}
    & \lim_{T\rightarrow\infty} h\big(\mu, c, \eta, \alpha-\delta(T), \gamma-\delta'(T), \epsilon, \cG(T)\big) \\
    & =  \lim_{T_1\rightarrow\infty}\lim_{T_2\rightarrow\infty}h\big(\mu, c, \eta, \alpha-\delta(T_1), \gamma-\delta'(T_1), \epsilon, \cG(T_2)\big) \\
    & = \lim_{T_1\rightarrow\infty}h\big(\mu, c, \eta, \alpha-\delta(T_1), \gamma-\delta'(T_1), \epsilon\big) \\
    & = h\big(\mu, c, \eta, \alpha, \gamma-\delta_\eta/\alpha, \epsilon\big), \\
\end{align*} 
where the second equality is due to Assumption 4 and the last equality is due to Assumptions 3 and 5. Similarly, we have
\begin{align*}
    \lim_{T\rightarrow\infty} h\big(\mu, c, \eta, \alpha+\delta(T), \gamma+\delta'(T), \epsilon, \cG(T)\big)  = h\big(\mu, c, \eta, \alpha, \gamma+\delta_\eta/\alpha, \epsilon\big).
\end{align*} 
Therefore, let $T$ goes to $\infty$ in \eqref{eq:proof gen concentration}, we have
\begin{align*}
    h\big(\mu, c, \eta, \alpha, \gamma-\delta_\eta/\alpha, \epsilon\big) \leq \lim_{T\rightarrow\infty}h\big(\hat\mu_{\cS_T}, c, \hat\eta, \alpha, \gamma, \epsilon, \cG(T)\big) \leq
    h\big(\mu, c, \eta, \alpha, \gamma+\delta_\eta/\alpha, \epsilon\big).
\end{align*} 
\end{proof}

\subsection{Proof of Generalization of Concentration Theorem}
\label{sec:proof generalization concentration}

\begin{proof}[Proof of Theorem \ref{thm:generalization concentration}]
First, we introduce some notation. Let $h(\mu, c, \eta, \alpha, \gamma, \epsilon, \cG)$ be the optimal value and $g(\mu, c, \eta, \alpha, \gamma, \epsilon, \cG)$ be the optimal solution with respect to the following generalized concentration of measure problem with label uncertainty constraint:
\begin{align}
\label{eq:opt generalized concentration label uncertainty}
    \minimize_{\cE\in\cG}\: \mu(\cE_\epsilon) \quad\text{subject to}\quad \mu(\cE)\geq \alpha \:\:\text{and}\:\: \LU(\cE; \mu, c, \eta)\geq \gamma.
\end{align}
Note that the difference between \eqref{eq:opt generalized concentration label uncertainty} and \eqref{eq:opt concentration label uncertainty} is that the feasible set of $\cE$ is restricted to some collection of subsets $\cG\subseteq\pow(\cX)$. Correspondingly, we let $h(\hat\mu_\cS, c, \hat\eta, \alpha, \gamma, \epsilon, \cG)$ and $g(\hat\mu_\cS, c, \hat\eta, \alpha, \gamma, \epsilon, \cG)$ be the optimal value and optimal solution with respect to the empirical optimization problem \eqref{eq:empirical opt concentration label uncertainty}.

Let $\cE = g(\mu, c, \eta, \alpha+\delta, \gamma + \delta', \epsilon, \cG)$ and $\hat\cE = g(\hat\mu_\cS, c, \hat\eta, \alpha, \gamma, \epsilon, \cG)$, where $\delta'$ will be specified later. Note that when these optimal sets do not exist, we can select a set for which the expansion is arbitrarily close to the optimum, then every step of the proof will apply to this variant. According to the definition of complexity penalty, we have
\begin{align}
\label{eq:complexity penalty implication}
    \Pr_{\cS\leftarrow\mu^m}\big[|\hat\mu_{\cS}(\hat\cE)-\mu(\hat\cE)|\geq\delta\big]\leq \phi(m,\delta).
\end{align}
Since $\hat\mu_{\cS}(\hat\cE) \geq \alpha$ by definition, \eqref{eq:complexity penalty implication} implies that 
\begin{align}
\label{eq:inital measure lower bound}
     \Pr_{\cS\leftarrow\mu^m}\big[\mu(\hat\cE)\leq\alpha-\delta\big]\leq \phi(m,\delta).
\end{align}
In addition, according to Theorem \ref{thm:generalization lu}, for any $\delta\in(0,\alpha/2)$, we have
\begin{align}
\label{eq:lu high prob bound}
    \Pr_{\cS\leftarrow\mu^m}\bigg[\big|\LU(\hat\cE; \mu, c, \eta) - \LU(\hat\cE; \hat\mu_{\cS}, c, \hat\eta)\big| \leq \frac{4\delta + \delta_\eta}{\alpha-2\delta}\bigg] \leq 2\phi(m,\delta),
\end{align}
where the inequality holds because of \eqref{eq:inital measure lower bound} and the union bound. Since $\LU(\hat\cE; \hat\mu_{\cS}, c, \hat\eta) \geq \gamma$ by definition, \eqref{eq:lu high prob bound} implies that
\begin{align}
\label{eq:lu lower bound}
    \Pr_{\cS\leftarrow\mu^m}\bigg[\LU(\hat\cE; \mu, c, \eta) \leq \gamma - \frac{4\delta + \delta_\eta}{\alpha-2\delta}\bigg] \leq 2\phi(m,\delta).
\end{align}
Based on the definition of the concentration function $h$, combining \eqref{eq:inital measure lower bound} and \eqref{eq:lu lower bound} and making use of the union bound, we have
\begin{align}
\label{eq:expanded measure lower bound}
     \Pr_{\cS\leftarrow\mu^m}\big[\mu(\hat\cE_\epsilon)\leq h(\mu, c, \eta, \alpha-\delta, \gamma-\delta', \epsilon, \cG)\big]\leq 3\phi(m,\delta),
\end{align}
where we set $\delta' = \frac{4\delta + \delta_\eta}{\alpha-2\delta}$. Note that according to the definition of $\phi_\epsilon$, we have
\begin{align}
\label{eq:high prob bound ext}
     \Pr_{\cS\leftarrow\mu^m}\big[|\mu(\hat\cE_\epsilon) - \mu_\cS(\hat\cE_\epsilon)|\leq \delta\big]\leq \phi_\epsilon(m,\delta),
\end{align}
thus combining \eqref{eq:expanded measure lower bound} and \eqref{eq:high prob bound ext} by union bound, we have
\begin{align}
\label{eq:final lower bound}
     \Pr_{\cS\leftarrow\mu^m}\big[\hat\mu_\cS(\hat\cE_\epsilon)\leq h(\mu, c, \eta, \alpha-\delta, \gamma-\delta', \epsilon, \cG) - \delta\big] \leq 3\phi(m,\delta) + \phi_\epsilon(m,\delta).
\end{align}
This completes the proof of one-sided inequality of Theorem \ref{thm:generalization concentration}. The other side of Theorem \ref{thm:generalization concentration} can be proved using the same technique. In particular, we have
\begin{align}
\label{eq:final upper bound}
     \Pr_{\cS\leftarrow\mu^m}\big[\hat\mu_\cS(\hat\cE_\epsilon)\geq h(\mu, c, \eta, \alpha+\delta, \gamma+\delta', \epsilon, \cG) + \delta\big] \leq 3\phi(m,\delta) + \phi_\epsilon(m,\delta).
\end{align}
Combining \eqref{eq:final lower bound} and \eqref{eq:final upper bound} by union bound completes the proof.
\end{proof}

\section{Heuristic Search Algorithm}
\label{sec:alg pseudocode}

The pseudocode of the heuristic search algorithm for the empirical label uncertainty constrained concentration problem \eqref{eq:empirical opt concentration label uncertainty} is shown in Algorithm \ref{alg:search}.

\begin{algorithm}[htb]
\label{alg:search}
\SetAlgoLined
\SetKwInOut{Input}{Input}
\SetKwInOut{Output}{Output}

\Input{\ a set of labeled inputs $\{\bx, c(\bx), \hat\eta(\bx)\}_{\bx\in\cS}$, parameters $\alpha, \gamma, \epsilon, T$}
 $\hat\cE\leftarrow\{\}$,\quad$\hat\cS_{\init}\leftarrow \{\}$,\quad$\hat\cS_{\exp}\leftarrow \{\}$\;
 \For{$t = 1,2,\ldots,T$}{
    $k_{\text{lower}}\leftarrow\lceil{(\alpha|\cS|-|\hat\cS_\init|)/(T-t+1)\rceil}$,\quad$k_{\text{upper}}\leftarrow(\alpha|\cS|-|\hat\cS_\init|)$\;
    $\Omega\leftarrow \{\}$\;
     \For{$\bu\in\cS$}{
        \For{$k\in [k_{\text{lower}},k_{\text{upper}}]$}{
            $r_k(\bu)\leftarrow$ compute the $\ell_p$ distance from $\bu$ to the $k$-th nearest neighbour in $\cS\setminus\hat\cS_{\init}$\;
            $\cS_{\init}(\bu,k)\leftarrow \{\bx\in\cS\setminus\hat\cS_{\init}: \|\bx - \bu\|_2 \leq r_k(\bu)\}$\;
            $\cS_{\exp}(\bu,k)\leftarrow \{\bx\in\cS\setminus\hat\cS_{\exp}: \|\bx - \bu\|_2 \leq r_k(\bu)+\epsilon\}$\;
            \If{$\LU(\cS_{\init}(\bu,k), \hat\mu_\cS, c, \hat\eta) \geq \gamma$}{insert $(\bu, k)$ into $\Omega$}
        }
    }
    $(\hat\bu, \hat{k})\leftarrow \argmin_{(\bu,k)\in\Omega}\{|\cS_\exp(\bu,k)| - |\cS_\init(\bu,k) |\}$\;
    $\hat\cE\leftarrow\hat\cE\cup\Ball(\hat\bu,r_{\hat{k}}(\hat\bu))$\;
    $\hat\cS_\init\leftarrow \hat\cS_\init\cup\cS_{\init}(\hat\bu,\hat{k})$,\quad     $\hat\cS_\exp\leftarrow \hat\cS_\exp\cup\cS_{\exp}(\hat\bu,\hat{k})$\;
  }
\Output{\:$\hat\cE$}
 \caption{Heuristic Search for Robust Error Region under $\ell_p (p\in\{2,\infty\})$}
\end{algorithm}

\section{Detailed Experimental Settings}
\label{sec:details}

In this section, we specify the details of the experiments presented in Section \ref{sec:exp}. The robustness results of all the adversarially-trained models from RobustBench \citep{croce2020robustbench} are evaluated using the auto attack \citep{croce2020reliable}. All of our experiments are conducted using a GPU server with a NVIDIA GeForce RTX 2080 Ti Graphics card.

\shortsection{Error Region Label Uncertainty} We explain the experimental details of Figure \ref{fig:label uncertainty vis}. For standard trained classifiers, we implemented five neural network architecture, including a $4$-layer neural net with two convolutional layers and two fully-connected layers (\emph{small}), a $7$-layer neural net with four convolutional layers and three fully-connected layers (\emph{large}), a ResNet-18 architecture (\emph{resnet18}), ResNet-50 architecture (\emph{resnet50}) and a WideResNet-34-10 architecture (\emph{wideresnet}).
We trained the \emph{small} and \emph{large} model using a Adam optimizer with initial learning rate $0.005$, whereas we trained the \emph{resnet18}, \emph{resnet50} and \emph{wideresnet} model using a SGD optimizer with initial learning rate $0.01$. All models are trained using a piece-wise learning rate schedule with a decaying factor of $10$ at epoch $50$ and epoch $75$, respectively.
For Figure \ref{fig:measure vs uncertainty std}, we plotted the label uncertainty and standard risk for the intermediate models obtained at epochs $5, 10, \ldots, 100$ for each architecture. In addition, we also randomly selected different subsets of inputs with empirical measure of $0.05, 0.10, \ldots 0.95$ and plotted their corresponding label uncertainty with error bars.

For adversarially trained classifiers, we implemented the vanilla adversarial training method \citep{madry2017towards} and the adversarial training method with adversarial weight perturbation \citep{wu2020adversarial}, which are denoted as \emph{AT} and \emph{AT-AWP} in Figure \ref{fig:measure uncertainty adv} respectively. Both ResNet-18 (\emph{resnet18}) and WideResNet-34-10 (\emph{wideresnet}) architecture are implemented for each training method. A $10$-step PGD attack (PGD-10) with step size $2/255$ and maximum perturbation size $8/255$ is used for each model during training. In addition, each model is trained for $200$ epochs using a SGD optimizer with initial learning rate $0.1$ and piece-wise learning rate schedule with a decaying factor of $10$ at epoch 100 and epoch 150. We record the intermediate models at epoch $10, 20, \ldots, 200$ respectively.

\shortsection{Estimation of Intrinsic Robustness}
For Figure \ref{fig:alpha curve}, we first conduct a $50/50$ train-test split over the $10,000$ CIFAR-10 test images, then run Algorithm \ref{alg:search} for each setting on the training dataset to obtain the optimal subset. Here, we choose the value of $\alpha\in\{0.01, 0.02, \ldots, 0.15\}$ and tune the parameter $T$ for each $\alpha$ parameter. Next, we evaluate the empirical measure of the optimally-searched subset (denoted by \textit{empirical risk} in Figure \ref{fig:alpha curve}) and the empirical measure of its $\epsilon$-expansion using the testing dataset, and translate it into an intrinsic robustness estimate. Finally, we plot the empirical risk and the estimated intrinsic robustness for each parameter setting in Figure \ref{fig:alpha curve}.

\section{Additional Experiments}
\label{sec:other exp}

This appendix provides additional experimental results, supporting our arguments in Section \ref{sec:exp}.

\begin{figure*}[tb]
  \centering
    \subfigure[Highest Uncertainty Score]{
    \includegraphics[width=0.47\textwidth]{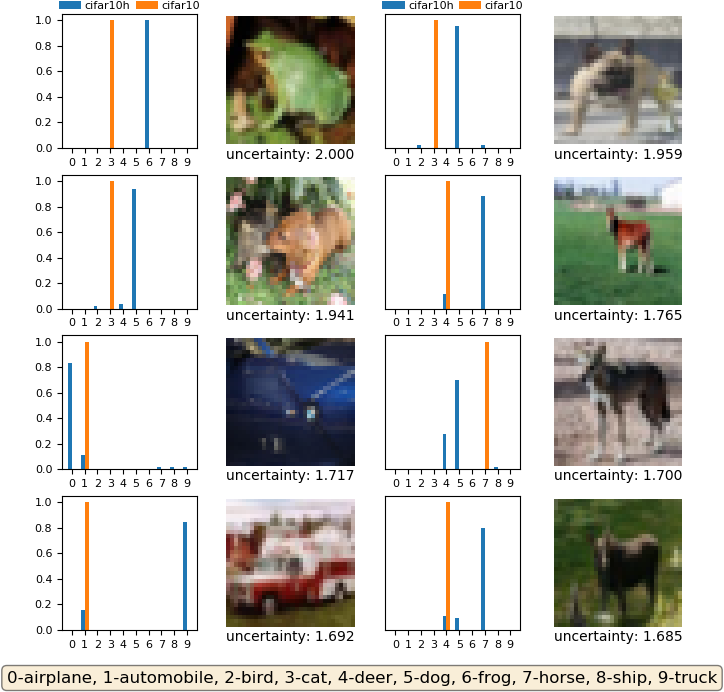}
    \label{fig:add illustration high}}
    \hspace{0.1in}
    \subfigure[Lowest Uncertainty Score]{
    \includegraphics[width=0.47\textwidth]{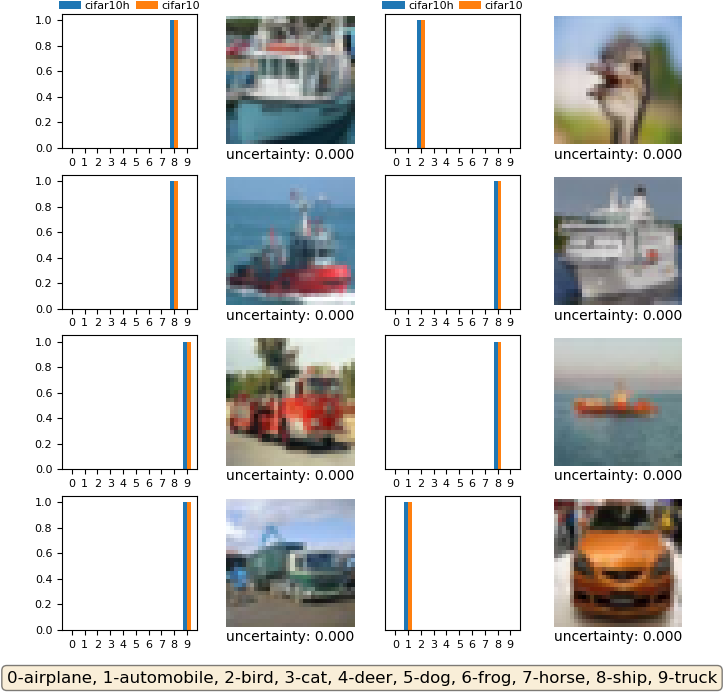}
    \label{fig:add illustration low}}
    \vspace{0.1in}
     \subfigure[Uncertainty Score around $1.0$]{
    \includegraphics[width=0.47\textwidth]{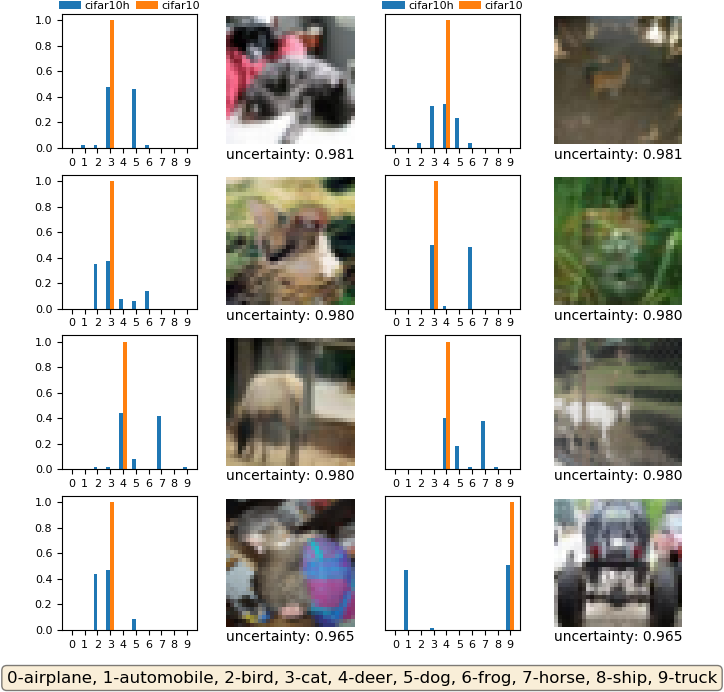}
    \label{fig:add illustration 1.0-1}}
    \hspace{0.1in}
    \subfigure[Uncertainty Score around $1.0$]{
    \includegraphics[width=0.47\textwidth]{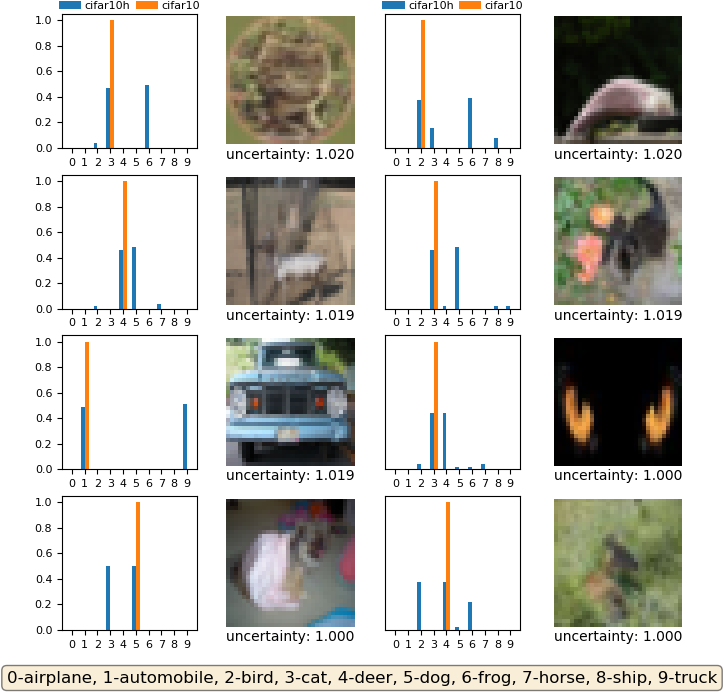}
    \label{fig:add illustration 1.0-2}}
  \caption{Illustration of human uncertainty labels and label uncertainty of CIFAR-10 test images. Each subfigure shows a group of images with a certain level of uncertainty score.}
\label{fig:add illustration}
\end{figure*}

\shortsection{Visualization of label uncertainty} Figure \ref{fig:add illustration} shows some CIFAR-10 images with the original CIFAR-10 labels and the CIFAR-10H human uncertainty labels. The label uncertainty score is computed based on Definition \ref{def:label uncertainty} and provided under each image. 

There are a few examples with high label uncertainty, whose CIFAR-10 label contradicts with the CIFAR-10H soft label (see the first two images in Figure \ref{fig:add illustration high}), indicating they are actually mislabeled.
The images with uncertainty scores around $1.0$ do appear to be images that are difficult for human to recognize, whereas images with lowest uncertainty scores look clearly representative of the labeled class. These observations show the usefulness of the proposed label uncertainty definition.

\shortsection{Estimation of Intrinsic Robustness} Table \ref{table:main exp results} summarizes our estimated intrinsic robustness limits produced by Algorithm \ref{alg:search} for different hyperparameter settings. In particular, we set $\alpha = 0.05$ and $\gamma=0.17$ to roughly reflect the standard error and the label uncertainty of the error regions with respect to the state-of-the-art classification models (see Figure \ref{fig:label uncertainty vis}), use $\epsilon\in\{4/255, 8/255, 16/255\}$ for $\ell_\infty$ and $\epsilon\in\{0.5, 1.0, 1.5\}$ for $\ell_2$.
Note that we also compare our estimate with the intrinsic robustness limit implied by the standard concentration by setting $\gamma = 0$ for each setting. 

We perform a $50/50$ train-test split on the CIFAR-10 test images: we obtain the optimal subset with the smallest $\epsilon$-expansion on the training dataset based on Algorithm \ref{alg:search} and evaluate it on the testing dataset. We report both the empirical measure of the optimally-found subset (\textit{Empirical Risk} in Table \ref{table:main exp results}), and the translated intrinsic robustness estimate. These results show that our estimation of intrinsic robustness generalizes from the training data to the testing data, and support the argument that our estimate is a more accurate characterization of intrinsic robustness compared with standard one.

\begin{table*}[tb]
\caption{Summary of the main results using our method for different settings on CIFAR-10 dataset. We conduct $5$ repeated trials for each setting to record the mean statistics and its standard deviation.}\label{table:main exp results}
\small{
\begin{center}
	\begin{tabular}{lcccccccc}
		\toprule
		\multirow{2}{*}[-2pt]{\textbf{Metric}} & 
		\multirow{2}{*}[-2pt]{$\bm\alpha$} &
		\multirow{2}{*}[-2pt]{$\bm\epsilon$} & 
		\multirow{2}{*}[-2pt]{$\gamma$} &
		\multirow{2}{*}[-2pt]{$\bm{T}$}
		& \multicolumn{2}{c}{\textbf{Empirical Risk} $\bm{(\%)}$} 
		& \multicolumn{2}{c}{\textbf{Intrinsic Robustness} $\bm{(\%)}$} 
		\\ 
		\cmidrule(l){6-7}
		\cmidrule(l){8-9}
		& & & & & training & testing & training & testing \\
		\midrule
		\multirow{7}{*}{$\ell_\infty$} & 	\multirow{7}{*}{$0.05$} & \multirow{2}{*}{$4/255$} & $0.0$ & $5$ & $5.80\pm 0.04$ & $4.50\pm 0.21$ & $93.48\pm 0.10$ & $93.86\pm 0.26$ \\
		& & & $0.17$ & $5$ & $5.84\pm 0.10$ & $5.06\pm 0.82$ & $92.03\pm 0.45$ & $92.61\pm 1.12$ \\
	    \cmidrule(l){3-9}
	    & & \multirow{2}{*}{$8/255$}  & $0.0$ & $10$ & $5.77\pm 0.01$ & $4.76\pm 0.27$ & $92.89\pm 0.11$ & $92.36\pm 0.33$ \\
	    & & & $0.17$ & $10$ & $5.77\pm 0.02$ & $4.85\pm 0.58$ & $90.91\pm 0.53$ & $90.98\pm 1.03$ \\
	    \cmidrule(l){3-9}
	    & & \multirow{2}{*}{$16/255$}  & $0.0$ & $5$ & $5.68\pm 0.04$ & $5.30\pm 0.33$ & $88.44\pm 0.47$ & $87.89\pm 1.24$ \\
	    & & & $0.17$ & $5$ & $5.67\pm 0.25$ & $4.79\pm 0.75$ & $81.96\pm 1.69$ & $83.83\pm 2.37$ \\
	    \midrule
		\multirow{7}{*}{$\ell_2$} & 	\multirow{7}{*}{$0.05$} & \multirow{2}{*}{$0.5$} & $0.0$ & $5$ & $5.76\pm 0.00$ & $5.41\pm 0.60$ & $93.78\pm 0.10$ & $93.51\pm 0.67$ \\
		& & & $0.17$ & $5$ & $5.76\pm 0.00$ & $5.36\pm 0.14$ & $91.89\pm 0.38$ & $91.70\pm 0.49$ \\
	    \cmidrule(l){3-9}
	    & & \multirow{2}{*}{$1.0$}  & $0.0$ & $5$ & $5.76\pm 0.00$ & $6.00\pm 0.50$ & $92.93\pm 0.06$ & $92.22\pm 0.55$ \\
	    & & & $0.17$ & $5$ & $5.76\pm 0.00$ & $5.32\pm 0.29$ & $87.86\pm 0.79$ & $87.75\pm 0.58$ \\
	    \cmidrule(l){3-9}
	    & & \multirow{2}{*}{$1.5$}  & $0.0$ & $5$ & $5.76\pm 0.00$ & $5.67\pm 0.56$ & $91.98\pm 0.13$ & $91.82\pm 0.65$ \\
	    & & & $0.17$ & $5$ & $5.76\pm 0.00$ & $5.69\pm 0.45$ & $83.33\pm 2.04$ & $82.87\pm 2.50$ \\
	    \bottomrule
	\end{tabular}
\end{center}
}
\end{table*}

\section{Estimating Label Errors using Confident Learning} 
\label{sec:extension}

The proposed concentration estimation framework relies on the knowledge of human soft labels to determine which example has label uncertainty exceeding a certainty threshold. Since most machine learning datasets do not provide such information like CIFAR-10H, this raises the question of how to extend our method to the setting where human soft labels are unavailable.

We make an initial attempt to address the aforementioned issue using the confident learning approach of \citet{northcutt2021confident}. Their goal was to identify label errors for a dataset, which is closely related to label uncertainty. The method first computes a confidence joint matrix based on the predicted probabilities of a pretrained classifier, then selects the top examples based on a ranking rule, such as self-confidence or max margin. If we are able to approximate human label uncertainty from the raw inputs and labels, or identify the set of examples with high label uncertainty, then we can immediately adapt our proposed framework by leveraging such estimated results. However, we observe only a weak correlation between the set of label errors that are produced by confident learning and the set of examples with high human label uncertainty.

We conduct the experiments on CIFAR-10 and identify the set of label errors based on confident learning. We train a ResNet-50 based classification model on the CIFAR-10 training data, and select examples in the CIFAR-10 test dataset as labeling errors using the best ranking method suggested in \citet{northcutt2021confident}.
Figure \ref{fig:err vs non-err} compares the distribution of human label uncertainty 
(based on the human soft labels from CIFAR-10H) between the set of estimated label error and non-errors. Although the set of examples estimated as label error have relative higher human label uncertainty compared with non-errors, there exist over $30\%$ of estimated label errors have $0$ label uncertainty for human annotators. It implies that there is a mismatch between label errors identified by human and that estimated using confident learning techniques. This is further confirmed by the precision-recall curve presented in Figure \ref{fig:precision-recall}. We treat examples with human label uncertainty exceeding $0.5$ as the `ground-truth' uncertain images, and vary the size of produced set of label errors to plot the precision and recall curve. The fact that precision rate is uniformly lower than $0.25$, indicating that over $75\%$ of the estimate error examples have human label uncertainty less than $0.5$.

\begin{figure}[tb]
  \centering
    \subfigure[Error versus Non-error]{
    \includegraphics[width=0.45\textwidth]{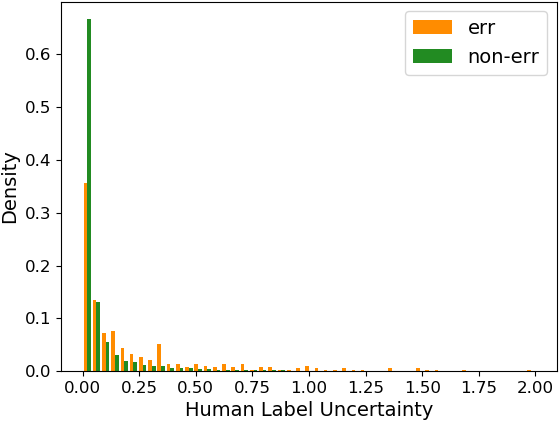}
    \hspace{0.2in}
    \label{fig:err vs non-err}}
  \subfigure[Precision Recall Curve]{
    \includegraphics[width=0.45\textwidth]{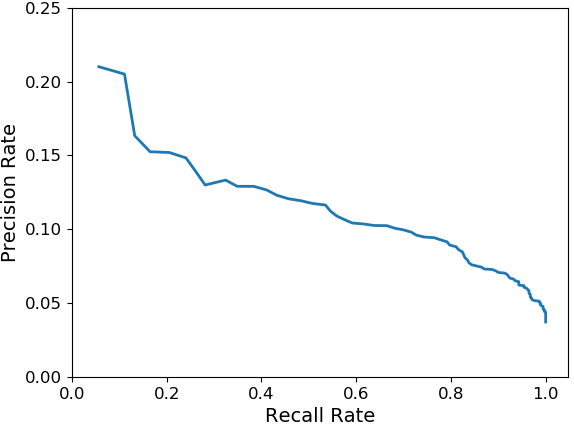}
    \label{fig:precision-recall}}
  \vspace{-0.1in}
  \caption{Illustration of misalignment label errors recognized by human and those identified by confident learning (a) Distribution of human label uncertainty between errors and non-errors estimated using confident learning; (b) Precision-recall curve for estimating the set of examples with human label uncertainty exceeding $0.5$. }
\end{figure}

\begin{figure}[tb]
  \centering
    \subfigure{
    \includegraphics[width=0.47\textwidth]{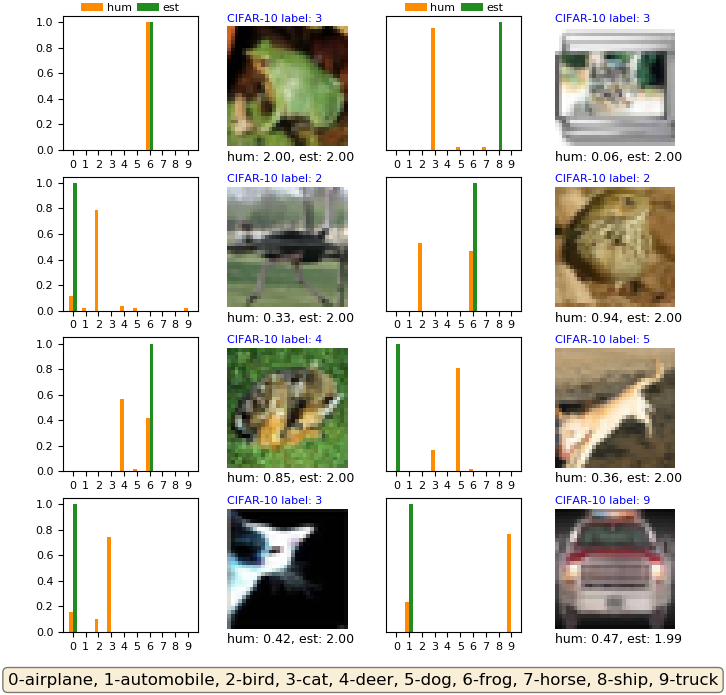}}
    \subfigure{
    \includegraphics[width=0.47\textwidth]{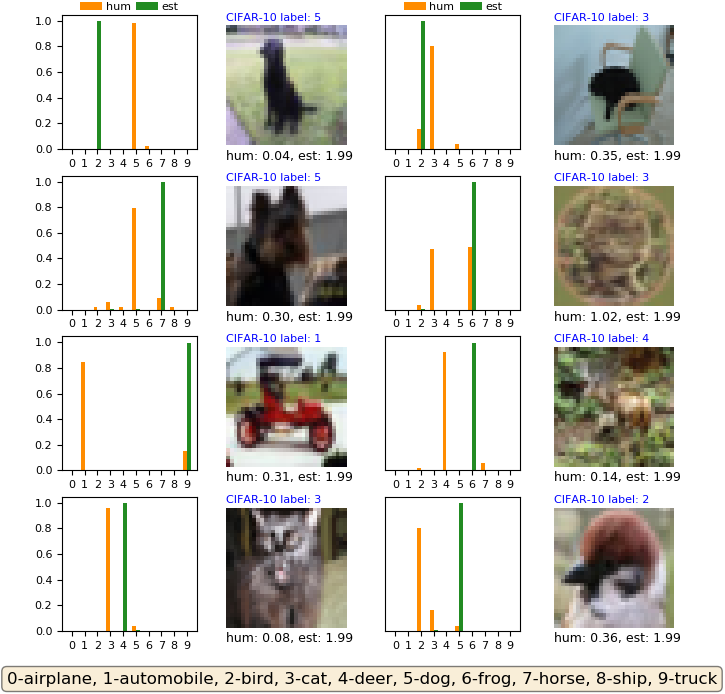}}
    \subfigure{
    \includegraphics[width=0.47\textwidth]{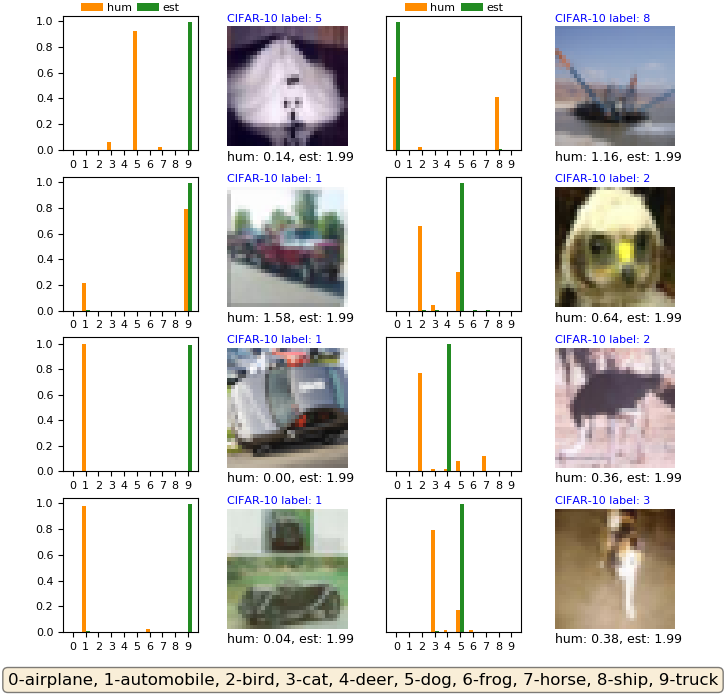}}
    \subfigure{
    \includegraphics[width=0.47\textwidth]{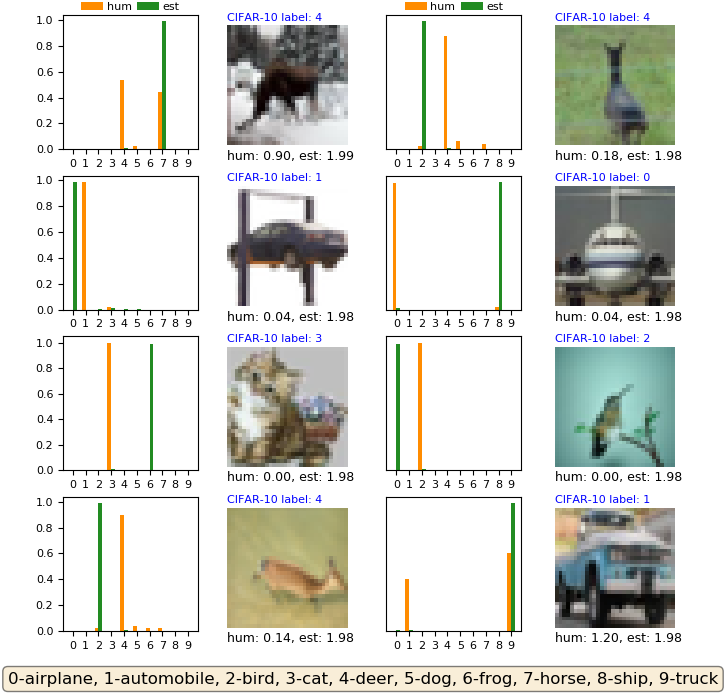}}
    \caption{Visualization of label distribution of top uncertain CIFAR-10 test images estimated using a confident learning approach. Both human and estimated label distribution are plotted in each figure. The corresponding label uncertainty scores are computed and provided under each image, while the original CIFAR-10 label is highlighted in blue above each image.}
\label{fig:vis lu est}
\end{figure}

Figure \ref{fig:vis lu est} visualizes the human label distribution and estimated label distribution on CIFAR-10. We compute the estimated label uncertainty of each CIFAR-10 testing examples by replacing the human label distribution with the predicted probabilities of the trained model. It can be seen that there exists a misalignment between the human label distribution and the distribution estimated using some neural network. This again confirms that label errors produced by confident learning are not guaranteed to be examples that are difficult for humans.

\end{document}